\title{Semi-dual Regularized Optimal Transport}
\author{Marco Cuturi\thanks{\url{marco.cuturi@ensae.fr}, CREST, ENSAE}
\and 
Gabriel Peyr\'e\thanks{\url{gabriel.peyre@ens.fr}, CNRS and DMA, \'Ecole Normale Sup\'erieure, FRANCE}
%  (\href{mailto:mcuturi@i-kyoto-u.ac.jp}{mcuturi@i-kyoto-u.ac.} \href{mailto:mcuturi@i-kyoto-u.ac.jp}{jp}, 
% \href{mailto:gabriel.peyre@ens.fr}{fr}).  } 
}
\begin{document}
%siam_id=M103260
%CODEN=SJISBI

\maketitle

% \newcommand{\slugmaster}{%
% \slugger{siims}{2016}{9}{1}{320--343}}

% !TEX root = ../SIGEST-SemiDual.tex

% \setcounter{page}{320}
\begin{abstract}
Variational problems that involve Wasserstein distances and more generally optimal transport (OT) theory are playing an increasingly important role in data sciences. Such problems can be used to form an examplar measure out of various probability measures, as in the Wasserstein barycenter problem, or to carry out parametric inference and density fitting, where the loss is measured in terms of an optimal transport cost to the measure of observations. Despite being conceptually simple, such problems are computationally challenging because they involve minimizing over quantities (Wasserstein distances) that are themselves hard to compute.
Entropic regularization has recently emerged as an efficient tool to approximate the solution of such variational Wasserstein problems. In this paper, we give a thorough duality tour of these regularization techniques. In particular, we show how important concepts from classical OT such as $c$-transforms and semi-discrete approaches translate into similar ideas in a regularized setting. 
These dual formulations lead to smooth variational problems, which can be solved using smooth, differentiable and convex optimization problems that are simpler to implement and numerically more stable that their un-regularized counterparts.
We illustrate the versatility of this approach by applying it to the computation of Wasserstein barycenters and gradient flows of spatial regularization functionals.
% We show that the dual formulation of Wasserstein variational problems introduced recently by G.~Carlier, A.~Oberman, and E.~Oudet [{\em ESAIM Math.\ Model.\ Numer.\ Anal.}, 6 (2015), pp.~1621--1642]  can be regularized using an entropic smoothing, which leads to smooth, differentiable and convex optimization problems that are simpler to implement and numerically more stable. We illustrate the versatility of this approach by applying it to the computation of Wasserstein barycenters and gradient flows of spatial regularization functionals.
\end{abstract}

\begin{keywords}
optimal transport, Wasserstein barycenter, Sinkhorn algorithm, entropic regularizartion, gradient flows, convex optimization
\end{keywords}

\begin{AMS}
49N99, 49J20, 65D99, 68U10, 90C08, 90C25
\end{AMS}

% \begin{DOI}
% 10.1137/15M1032600
% \end{DOI}

% MOD
\pagestyle{myheadings}
\thispagestyle{plain}
\markboth{MARCO CUTURI AND \smash[t]{GABRIEL PEYR\'E}}{Semi-dual Regularized Optimal Transport}

% !TEX root = ../SIGEST-SemiDual.tex

\section{Introduction}

% Optimal transport is a well established research subject at the interplay between mathematical analysis and probability~\cite{Villani03}. Optimal transport provides a rich set of tools to define a geometry for probability measures supported on a set which is itself a metric space, among which the family of Wasserstein distances (\emph{a.k.a} earth mover's distances,~\citealt{RubTomGui00}). The goal of the present work is to define tractable optimization schemes to deal with \emph{convex variational problems} involving transportation distances, the most representative of them being the computation of the barycenter of a set of histograms under the Wasserstein metric.

% Histograms are fundamental objects popularly used in machine learning to represent complex objects as frequency vectors in the probability simplex. By defining first a set of relevant features, one can then form for each object a normalized histogram that keeps track of the frequencies of each of these features---\eg, bags-of-words for text \cite{salton1975vector}, bags-of-visual-words for images \cite{Lowe1999,oliva2001modeling}. 

To compare two probability vectors (or histograms) in the probability simplex, information divergences---the Hellinger and $\chi_2$ distances, the Kullback--Leibler and Jensen-Shannon diver\-gen\-ces---proceed by comparing them elementwise. Although this feature is attractive from a computational point-of-view, one of its drawbacks in high-dimensional regimes is that histograms tend to be sparse and share little common overlap when considered in pairs, making information divergences less informative. Optimal transport distances~\cite[section~7]{villani09}---a.k.a.\ Wasserstein or earth mover's distances~\cite{RubTomGui00}---are usually seen as a remedy to this issue. This is because they can incorporate a substitution cost between the bins of these histograms to account for the lack of overlap in their support: Intuitively, it does not matter that two histograms carry mass on differing bins, as long as those bins are similar, \ie have low substitution cost.  Because of this flexibility, Wasserstein distances have found popular applications in data sciences, such as for instance image retrieval~\cite{RubTomGui00,Pele-iccv2009}, image interpolation~\cite{Bonneel-displacement}, computational geometry~\cite{Merigot2011}, color image processing~\cite{2013-Bonneel-barycenter,2014-xia-siims}, image registration~\cite{MassGenTransport}, or machine learning~\cite{CourtyFlamary}.

The versatility of optimal transport distances comes, however, at a price: evaluating the OT distance between two histograms requires computing an optimal coupling, that is a matrix with as many lines and columns than those histograms which is well aligned with the substitution cost matrix. Finding that optimal coupling is typically done by solving a network flow problem, whose cost scales supercubicly with the dimension of the considered histograms. That cost becomes even more of a drawback if one attempts to study so called "variational Wasserstein problems" that consists in minimizing an objective function which defined through various OT distances between several histograms.

%%%%%%%%%%%%%%%%%%%%%%%%%%%%%%%%%%%%%%
%SEC 1.1
\subsection{Variational Wasserstein problems} 

Many learning tasks on probability vectors, such as averaging or clustering, can be framed as variational problems that involve distances between pairs of measures. These problems are easily solved when such divergences are Bregman divergences~\cite{lee1999learning,banerjee2005clustering,JeffreysCentroid-2013}, but  they are far more challenging when considering instead Wasserstein distances. \citeauthor{Carlier_wasserstein_barycenter} \cite{Carlier_wasserstein_barycenter} studied the first problem of this type, the Wasserstein barycenter problem (WBP), and showed that it is related to the multi-marginal optimal transport problem. More recently,~\cite{Solomon-ICML} proposed the Wasserstein propagation-on-graphs framework, showing that it can be solved through a large linear program. Other recent developments include clustering with weight constraints~\citep{cuturi2014fast}, statistics to develop population estimators~\cite{BigotBarycenter} or merge probabilities in a Bayesian setting~\cite{JMLR:v18:16-655}, computer graphics to perform image modification~\cite{2014-xia-siims,2013-Bonneel-barycenter,2015-solomon-siggraph}, computer vision~\cite{ZenICPR14} to summarize complex visual signals.

Another important example of Wasserstein variational problems is the approximation of gradient flows. As initially shown by~\cite{jordan1998variational}, it is indeed possible to approximate solutions of a large family of partial differential equations by iteratively minimizing some energy functional plus a Wasserstein distance to the previous iterate. We refer to section~\ref{sec-grad-flows} for more details and references about these schemes. 

Beside the computation of barycenters, it is also possible to integrate Wasserstein distances into more general variational problems. 
For instance, optimal transport distances are used as a data fidelity to perform image denoising~\citep{Burger-JKO,Lorentz}, image segmentation~\citep{RabinPapadakisSSVM,SchnorSegmentation,SchmitzerSegmentation} and Radon transform reconstruction~\citep{AbrahamRadon,BenamouCCNP15}.

Our aim in this paper is to propose a computational framework that is both scalable and flexible enough to minimize energies that involve not only Wasserstein distances but also more general functions such as regularization terms~\citep{10.1007/978-3-319-68445-1_10}. To do so, we exploit regularization, Legendre duality, and the usual toolbox of convex optimization.

%%%%%%%%%%%%%%%%%%%%%%%%%%%%%%%%%%%%%%
%SEC 1.2
\subsection{Previous works}
\label{sec-pw}

%%%
\paragraph{Duality and Semi-discrete OT}

Duality plays a key role in both the theoretical analysis and the numerical resolution of optimal transport. Indeed, most standard solvers, such as minimum cost network flows or auction algorithms (see~\citep[\S3]{2018-Peyre-computational-ot} for a review), build upon duality. 
The idea of duality finds a particularly powerful formulation when solving the OT problem not only between histograms or discrete measures, but more generally between a continuous measure and a discrete one. This problem is known as the ``semi-discrete'' OT problem, which was first investigated by~\cite{oliker1989numerical} and later refined in~\cite{AurenhammerHA98}. They use the machinery of $c$-transforms to convert the resolution of the OT between a continuous density and a discrete measure by turning it into a convex \emph{finite} dimensional optimization problem. 
The recent revival of this method is due to~\cite{Merigot11} who proposed a quasi-Newton solver and clarified the link with concepts from computational geometry. We refer to~\cite{Levy2017review} for a recent overview. The use of a Newton solver which is applied to sampling in computer graphics is proposed in~\cite{de2012blue}, see also~\cite{levy2015numerical} for applications to 3-D volume and surface processing. 
A similar ``semi-dual'' formulation was used in~\cite{Carlier-NumericsBarycenters} to tackle the Wasserstein barycenter problem. One of the goals of our paper is to extend this formulation to include an entropic regularization and thus obtain a smooth optimization problem.  
As shown in~\cite{genevay2016stochastic}, another interest of this semi-dual formulation is that it enables the use of stochastic optimization methods, to tackle ``black box'' settings where the continuous density can only be accessed through i.i.d. sampling. This idea has been extended in~\cite{claici2018stochastic} for the computation of barycenters. 

%%%
\paragraph{Entropic regularization}

Despite being an old idea (for instance it is related to {Schr{\"o}dinger}'s problem~\cite{leonard2013survey}), the use of an entropic regularization to approximate the solution of OT-related problem has recently gained a lot of momentum in the imaging and machine learning community~\cite{cuturi2013sinkhorn}. 
% (see also~\cite{galichon2010matching} for applications in social sciences).
%
This regularization can be solved using Sinkhorn's algorihtm~\cite{Sinkhorn64}, which leads to a parallelizable numerical scheme which streams well on modern GPUs' architectures.
Beside its cheap computational cost, the chief advantage of entropic regularized OT is that it leads to a smooth transportation cost. It thus enables the use of simple optimization schemes for various Wasserstein variational problems, such as gradient descent in~\cite{cuturi2014fast} to solve the WBP; a more elaborate generalization of Sinkhorn's iterations in~\cite{BenamouCCNP15} tailored for the WBP, see also~\cite{solomon2015convolutional,bonneel2016wasserstein} for applications in image processing and computer graphics; Wasserstein gradient flows~\cite{Peyre-JKO}.

One of the goal of this paper is to bridge the gap between entropic regularization and semi-discrete methods, and to combine the strength of both approaches:  smooth convex optimization schemes that can be applied with full generality, without having to resort to Sinkhorn iterations (either as an inner approximation loop or as a generalized projection~\cite{BenamouCCNP15}).

% Their formulation requires, however, running a numerical subroutine, the Sinkhorn fixed-point iteration, to evaluate these objectives and compute their gradients. On the other hand, \cite{Carlier-NumericsBarycenters} show that the Fenchel--Legendre dual of the Wasserstein distance as well as its subgradients can be obtained in \emph{closed form} using nearest-neighbor assignments, that is, without having to solve a single optimal transport problem. The authors do, however, struggle with nondifferentiable objective functions and use a L-BFGS first order scheme. More recently, \cite{BenamouCCNP15} have proposed a generalized version of Sinkhorn's algorithm to compute barycenters based on Bregman's projections. This approach is useful for the barycenter problem but cannot be easily adapted to solve more involved problems that incorporate terms, such as regularizers, that are not Wasserstein distances.

% Note that this application requires computing the gradient of the dual of the smoothed Wasserstein distance with respect to two histograms. This formula is provided in Appendix~\ref{sub:two}. 

% \todo{Explain limits by adding reference to Entropic Wasserstein Gradient Flows? we should definitely add that reference}

%%%%%%%%%%%%%%%%%%%%%%%%%%%%%%%%%%%%%%
%SEC 1.3
\subsection{Contributions}

The main contribution of this paper is the extension of the classical ``semi-discrete'' formulation of OT to the setting of entropic regularization, to yield an OT problem that we call a ``semi-dual'' approach.
The content proposed here is a revised and extended version of the original paper~\cite{2016-Cuturi-siims}. 
We first detail in Section~\ref{sec:duals} the semi-dual formulation in the simple case of two input measures. As a by-product, we give formula for the Legendre transform of the regularized OT cost together with their gradient and Hessian, which is crucial to use efficient (quasi)-Newton solvers.
We show how this semi-dual formulation can be applied to various variational problems including the computation of barycenters in Section~\ref{sec:dualalgo} and more advanced problems (barycenters with extra regularity constraints and gradient flows) in Section~\ref{sec:exten}. 
Another illustration of the usefulness of this semi-dual approach is the application to image segmentation developed in~\cite{RabinPapadakisSSVM}.

% Our main contribution is to combine the strengths of the dual formulation of~\cite{Carlier-NumericsBarycenters} with the smoothing strategy laid out by~\cite{cuturi2014fast} to obtain a \emph{smooth} optimization problem whose objectives and derivatives can be computed in \emph{closed form} in section~\ref{sec:optimtransentrop}. We show that this approach can be readily used to compute Wasserstein barycenters in section~\ref{sec:dualalgo} and explain why using regularized Wasserstein distances might be beneficial to recover smooth solutions. 
% We proceed with more general energies that involve not only Wasserstein distances but also more generally spatial regularization of barycenters and gradient flows, in section~\ref{sec:exten}. 

The source code to reproduce the numerical illustrations of this article can be found online.\footnote{See \href{https://github.com/gpeyre/2015-SIIMS-wasserstein-dual/}{https://github.com/gpeyre/2015-SIIMS-wasserstein-dual/}.}

%\item Third, we show in Sections~\ref{sec:exten} that this approach is versatile and can serve as a blueprint to solve more advanced Wasserstein variational problem, such as Wasserstein propagation, as well as penalized or constrained barycenter problems. We provide a testing ground for these ideas by studying in depth the problem of averaging \emph{unnormalized measures} under a suitable extension of the Wasserstein metric. Indeed, in some application fields, the ability to take into account not only the distribution of various measures, but also their relative mass, can be crucial. We propose an algorithmic answer to this problem and illustrate it by addressing the hard problem of averaging brain activation maps defined on the folded triangulated cortex, \ie, producing a mean brain activation in a group of subjects. We present results both on realistic simulations as well as on publicly available magneto-encephalography (MEG) data.

%%%%%%%%%%%%%%%%%%%%%%%%%%%%%%%%%%%%%%
%SEC 1.4
\subsection{Notation}

When used on matrices, functions such as $\log$ or $\exp$ are always applied elementwise. For two matrices (or vectors) $\mathbf{A},\mathbf{B}$ of the same size, $\mathbf{A}\circ \mathbf{B}$ (resp., $\mathbf{A}/\mathbf{B}$) stands for the elementwise product (resp., division) of $\mathbf{A}$ by $\mathbf{B}$. If $\mathbf{u}$ is a vector, $\diag(\mathbf{u})$ is the diagonal matrix with diagonal $\mathbf{u}$. $\ones_{\n} \in \RR^{\n}$ is the (column) vector of ones.
Given an integer $n\geq 1$, we write $\Si_n$ for the discrete probability simplex
\eq{ 
	\Si_n \defeq \enscond{\a \in \RR_+^n}{\sum_i \a_i=1}.
}
The discrete entropy of a matrix $\P$ is defined as
\eql{\label{eq-discr-entropy}
	\HD(\P) \eqdef -\sum_{i,j} \P_{i,j} (\log(\P_{i,j})-1), 
}
with the convention $0 \log(0)=0$, and $\HD(\P) = -\infty$ if one of the entries $\P_{i,j}$ is negative.

% !TEX root = ../SIGEST-SemiDual.tex

%SEC 2
%\input sections/sec-wasserstein
\section{Duals of Regularized Optimal Transport}\label{sec:duals}

We introduce in this section the entropic regularization of the OT problem, study several dual formulations as well as their associated Legendre transforms, to show that they admit a simple closed form. 

%%%%%%%%%%%%%%%%%%%%%%%%%%%%%%%%%%%%%%%%%%%%%%%%%%%%%%%%%%%%%%%%%%%%%%%%%%%%%%%
\subsection{Regularized Optimal Transport}

We consider in what follows two histograms $(\a,\b) \in \simplex_n \times \simplex_m$.
These histograms account for the weights of two probability measures supported on two discrete sets of respective sizes $n,m$. This notation is slightly more general than the one used in~\cite{2016-Cuturi-siims}, since it allows us to consider measures which are not necessarily supported on the same points/bins.

%  In many applications of optimal transport, the cost matrix $\C$ is defined through points $(x_i)_i$ and $(y_j)_j$ taken in a metric space $(\Xcal,D)$ such that $\C_{i,j} = D(x_i,x_j)^\rho, \rho\geq 1$. Note, however, that we make no assumption on $\C$ in this paper other than the fact that it is symmetric and nonnegative.

The set of couplings linking a pair of histograms $(\a,\b)$ is defined as
\eq{
	\CouplingsD(\a,\b) \defeq \enscond{\P \in \RR_+^{\n \times m}}{ \P \ones_m = \a, \P^\top \ones_n = \b }.
}
Here $\P_{i,j}$ represent the amount of mass transferred from bin $i$ to bin $j$, and the constraints account for the conservation of mass. The set $\CouplingsD(\a,\b)$ is bounded in $\RR^{\n\times m}$, defined by linear equalities and non-negativity constraints, and is therefore a convex polytope.

To define optimal transport, we consider a cost matrix $\C\in \RR^{\n \times m}$. Each element $\C_{i,j}$ accounts for a substitution cost between $i$ and $j$, or equivalently as a (ground) cost required to move a unit of mass from bin $i$ to bin $j$.
The entropic regularization of the Kantorovitch formulation of OT mentioned first in Section~\ref{sec-pw} reads 
\eql{\label{eq-regularized-discr}
	\MKD_\C^\epsilon(\a,\b) \eqdef 
	\umin{\P \in \CouplingsD(\a,\b)}
		\dotp{\P}{\C} - \epsilon \HD(\P), 
} 
where $\epsilon\geq 0$, and where the inner product is defined as
\eq{
	\dotp{\C}{\P} \defeq \sum_{i,j} \C_{i,j} \P_{i,j}.
}
The case $\epsilon=0$ corresponds to the usual (linear) optimal transport problem. For $\epsilon>0$, Problem~\eqref{eq-regularized-discr} has an $\epsilon$-strongly convex objective and admits therefore a unique optimal solution $\P_\epsilon^\star$. 
While $\P_\epsilon^\star$ is not necessarily unique for $\epsilon=0$, we show in the following proposition that in the small $\epsilon$ limit, the regularization aims for the coupling, among those that are optimal for the linear problem, that has the highest entropy.

\begin{proposition}[Convergence with $\epsilon$]\label{prop-convergence-entropy}
The unique solution $\P_\epsilon$ of~\eqref{eq-regularized-discr} converges to the optimal solution with maximal entropy within the set of all optimal solutions of the Kantorovich problem, namely
\eql{\label{eq-entropy-conv-1}
	\P_\epsilon \overset{\epsilon \rightarrow 0}{\longrightarrow}
	\uargmin{\P} \enscond{ -\HD(\P) }{
		\P \in \CouplingsD(\a,\b), \dotp{\P}{\C} = \MKD_\C^0(\a,\b)
	}
}
so that in particular $\MKD_\C^\epsilon(\a,\b) \overset{\epsilon \rightarrow 0}{\longrightarrow} \MKD_\C^0(\a,\b)$.
One also has
\eql{\label{eq-entropy-conv-2}
	\P_\epsilon \overset{\epsilon \rightarrow \infty}{\longrightarrow}
	\a \transp{\b} = (\a_i \b_j)_{i,j}.
}
\end{proposition}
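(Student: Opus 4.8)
The plan is to treat the two limits separately using standard $\Gamma$-convergence / compactness arguments, since in both regimes the feasible set $\CouplingsD(\a,\b)$ is a fixed compact polytope, so every sequence $\P_\epsilon$ has accumulation points and it suffices to identify them.

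For the $\epsilon \to 0$ limit, fix any optimizer $\Q$ of~\eqref{eq-entropy-conv-1}, i.e.\ $\Q \in \CouplingsD(\a,\b)$ with $\dotp{\Q}{\C} = \MKD_\C^0(\a,\b)$ and maximal entropy among such. First I would write down the optimality inequality: since $\P_\epsilon$ minimizes $\dotp{\P}{\C} - \epsilon \HD(\P)$ over the polytope, comparing with $\Q$ gives
\eq{
	\dotp{\P_\epsilon}{\C} - \epsilon \HD(\P_\epsilon) \leq \dotp{\Q}{\C} - \epsilon \HD(\Q).
}
Since also $\dotp{\P_\epsilon}{\C} \geq \MKD_\C^0(\a,\b) = \dotp{\Q}{\C}$, this forces $\HD(\P_\epsilon) \geq \HD(\Q)$ for every $\epsilon > 0$, and also $\dotp{\P_\epsilon}{\C} - \dotp{\Q}{\C} \leq \epsilon(\HD(\P_\epsilon) - \HD(\Q)) \leq \epsilon\,(\,\log(nm) + \text{const} - \HD(\Q))$, which is bounded, hence $\dotp{\P_\epsilon}{\C} \to \MKD_\C^0(\a,\b)$. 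Now take any accumulation point $\P_0 = \lim_{k} \P_{\epsilon_k}$; by continuity of $\P \mapsto \dotp{\P}{\C}$ it is a linear-OT optimizer, and by upper semicontinuity of $\HD$ (it is continuous on the compact set, being a sum of terms $-x(\log x - 1)$ extended continuously) we get $\HD(\P_0) \geq \limsup_k \HD(\P_{\epsilon_k}) \geq \HD(\Q)$. Since $\Q$ was a maximizer of the entropy over linear optimizers, $\P_0$ is also one; and because $-\HD$ is strictly convex, that maximizer is unique, so $\P_0 = \Q$. As every accumulation point equals the same $\Q$ and the space is compact, $\P_\epsilon \to \Q$. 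The convergence $\MKD_\C^\epsilon(\a,\b) \to \MKD_\C^0(\a,\b)$ then follows since $\dotp{\P_\epsilon}{\C} \to \MKD_\C^0$ and $\epsilon \HD(\P_\epsilon)\to 0$ from the uniform bound on $\HD(\P_\epsilon)$.

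For the $\epsilon \to \infty$ limit, I would divide the objective by $\epsilon$: $\P_\epsilon$ minimizes $\tfrac1\epsilon\dotp{\P}{\C} - \HD(\P)$ over $\CouplingsD(\a,\b)$. As $\epsilon \to \infty$ this $\Gamma$-converges to $-\HD(\P)$ on the same polytope (the perturbation $\tfrac1\epsilon\dotp{\cdot}{\C}$ is linear with vanishing coefficient, hence converges uniformly to $0$), whose unique minimizer is the maximal-entropy coupling. A direct computation—maximize $-\sum_{ij}\P_{ij}(\log\P_{ij}-1)$ subject to the marginal constraints, e.g.\ by Lagrange multipliers or by recognizing $-\HD(\P)$ plus a constant as the relative entropy of $\P$ against $\a\transp{\b}$—shows this maximizer is exactly the independent coupling $\a\transp{\b}$. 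Repeating the accumulation-point argument (compactness + uniqueness of the limit by strict convexity) gives $\P_\epsilon \to \a\transp{\b}$.

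The main obstacle, such as it is, is purely bookkeeping around the entropy functional: one must be careful that $\HD$ as defined in~\eqref{eq-discr-entropy} (with the convention $0\log 0 = 0$) is genuinely continuous—not merely upper semicontinuous—on the compact polytope, since each summand $t \mapsto -t(\log t - 1)$ extends continuously to $t = 0$; this makes both the uniform $\Gamma$-convergence step and the passage to the limit in the entropy clean, and avoids any subtlety with boundary couplings having zero entries.
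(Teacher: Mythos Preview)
Your argument for the $\epsilon \to 0$ limit is correct and essentially identical to the paper's: both pivot on the same optimality inequality $0 \leq \dotp{\C}{\P_\epsilon} - \dotp{\C}{\Q} \leq \epsilon(\HD(\P_\epsilon) - \HD(\Q))$, then use compactness of the polytope, continuity of $\HD$, and strict convexity of $-\HD$ to pin down the accumulation point. You additionally supply a proof of the $\epsilon \to \infty$ statement~\eqref{eq-entropy-conv-2}, which the paper states but does not prove; your rescaling-by-$\epsilon$ argument there is correct and the identification of $\a\transp{\b}$ as the unique entropy maximizer in $\CouplingsD(\a,\b)$ is standard.
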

\begin{proof}
 	We consider a sequence $(\epsilon_\ell)_\ell$ such that $\epsilon_\ell \rightarrow 0$ and $\epsilon_\ell > 0$.	
 	We denote $\P_\ell = \P^\star_{\epsilon_\ell}$. Since $\CouplingsD(\a,\b)$ is bounded, we can extract a sequence (that we do not relabel for sake of simplicity) such that $\P_\ell \rightarrow \P^\star$. Since $\CouplingsD(\a,\b)$ is closed, $\P^\star \in \CouplingsD(\a,\b)$. We consider any $\P$ such that $\dotp{\C}{\P} = W_0(\a,\b)$. By optimality of $\P$ and $\P_\ell$ for their respective optimization problems (for $\epsilon=0$ and $\epsilon=\epsilon_\ell$), one has
 	\eql{\label{eq-proof-gamma-conv-proof}
 		0 \leq \dotp{\C}{\P_\ell} - \dotp{\C}{\P} \leq \epsilon_\ell ( \HD(\P_\ell)-\HD(\P) ).
 	}
 	Since $\HD$ is continuous, taking the limit $\ell \rightarrow +\infty$ in this expression shows that 
 	$\dotp{\C}{\P^\star} = \dotp{\C}{\P}$ so that $\P^\star$ is a feasible point of the right hand side appearing in~\eqref{eq-entropy-conv-1}. Furthermore, dividing by $\epsilon_\ell$ in~\eqref{eq-proof-gamma-conv-proof} and taking the limit shows that 
 	$\HD(\P) \leq \HD(\P^\star)$, which shows that $\P^\star$ is a solution of~\eqref{eq-entropy-conv-1}. Since the solution $\P_0^\star$ to this program is unique by strict convexity of $-\HD$, one has $\P^\star = \P_0^\star$, and the whole sequence is converging.\qquad
\end{proof}

A detailed analysis of the convergence with $\epsilon$ (in particular a first order expansion of the trajectory $\epsilon \mapsto \P_\epsilon$) can be found in~\cite{CominettiAsympt}.

%%%%%%%%%%%%%%%%%%%%%%%%%%%%%%%%%%%%%%%%%%%%%%%%%%%%%%%%%%%%%%%%%%%%%%%%%%%%%%%
\subsection{Duals and Semi-duals}
\label{sec-duals-semi-duals}

Duality plays a crucial role in the theoretical analysis of optimal transport, notably when studying a continuous formulation of the OT problem \eqref{eq-regularized-discr} that involves Radon measures. In that context, duality can be used to prove existence of solutions, and most importantly to show that in some cases optimal couplings are degenerate and correspond to so-called Monge transportation maps~\cite[\S1.1]{santambrogio2015optimal}. These theoretical considerations aside, duality is also pivotal to derive efficient primal-dual linear solvers. It is therefore not surprising that these considerations extend to the regularized problem~\eqref{eq-regularized-discr} when $\epsilon>0$.

%%%%
\paragraph{Dual with respect to both inputs}

We first detail below two dual formulations: the first one~\eqref{eq-dual-formulation} extends the classical OT dual to the regularized setting, while the second one~\eqref{eq-dual-formulation-bach} was suggested to us by Francis Bach. 

\begin{prop}
One has that for $\epsilon \geq 0$ the two equivalent formulations
\begin{align}\label{eq-dual-formulation}
	\MKD_\C^\epsilon(\a,\b) &= \umax{\fD \in \RR^n,\gD \in \RR^m}
		 %\Dd_\epsilon(\fD,\gD) \eqdef 
		 \dotp{\fD}{\a} + \dotp{\gD}{\b} 
		+ B_\epsilon(\C - \fD \oplus \gD)  \\
		& \label{eq-dual-formulation-bach} = \umax{\fD \in \RR^n,\gD \in \RR^m}
		 % \bar\Dd_\epsilon(\fD,\gD) \eqdef 
		 \dotp{\fD}{\a} + \dotp{\gD}{\b} 
		+ {\min}_\epsilon(\C - \fD \oplus \gD) 
\end{align} 
Here we denoted for $\SD \in \RR^{n \times m}$
\begin{align*}
	B_0(\SD) &\eqdef -\iota_{\RR_{+}^{n \times m}}(\SD)
	\qandq
	{\min}_0(\SD) \eqdef \min_{i,j} \SD_{i,j}, \\
	\foralls \epsilon>0, \quad
	B_\epsilon(\SD) &\eqdef - \epsilon \sum_{i,j} e^{-\SD_{i,j}/\epsilon}
	\qandq
	{\min}_\epsilon(\SD) \eqdef - \epsilon \log \sum_{i,j} e^{-\SD_{i,j}/\epsilon}.
\end{align*}
\end{prop}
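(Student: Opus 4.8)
The plan is to derive both identities by convex duality applied to the primal~\eqref{eq-regularized-discr}: I would first obtain~\eqref{eq-dual-formulation} by a Lagrangian (Fenchel--Rockafellar) computation, and then deduce~\eqref{eq-dual-formulation-bach} from it by optimizing out a redundant scalar degree of freedom in the dual variables. The case $\epsilon=0$ is the classical Kantorovich LP duality and its soft-min reformulation, so I would dispatch it separately and otherwise assume $\epsilon>0$. For~\eqref{eq-dual-formulation} I would attach multipliers $\fD\in\RR^n,\gD\in\RR^m$ to the two marginal constraints $\P\ones_m=\a$, $\transp{\P}\ones_n=\b$, leaving nonnegativity of $\P$ implicit in the domain of $\HD$; since $\dotp{\fD}{\P\ones_m}+\dotp{\gD}{\transp{\P}\ones_n}=\dotp{\P}{\fD\oplus\gD}$, the Lagrange dual function is
\[
  q(\fD,\gD)=\dotp{\fD}{\a}+\dotp{\gD}{\b}+\inf_{\P}\big(\dotp{\P}{\,\C-\fD\oplus\gD\,}-\epsilon\HD(\P)\big).
\]
The inner infimum separates over the entries of $\P$, and each scalar problem $\min_{p\geq 0}\,\big(p\,\SD_{i,j}+\epsilon\,p(\log p-1)\big)$ with $\SD=\C-\fD\oplus\gD$ is solved by $p^\star=e^{-\SD_{i,j}/\epsilon}$, of value $-\epsilon\,e^{-\SD_{i,j}/\epsilon}$; summing gives exactly $\inf_{\P}(\cdots)=B_\epsilon(\C-\fD\oplus\gD)$. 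Equivalently, $q(\fD,\gD)=\dotp{\fD}{\a}+\dotp{\gD}{\b}-f^\star(\fD\oplus\gD)$ with $f^\star$ the separable Legendre transform of $\P\mapsto\dotp{\P}{\C}-\epsilon\HD(\P)$, and weak duality already yields $\MKD_\C^\epsilon(\a,\b)\geq\sup_{\fD,\gD}q(\fD,\gD)$.

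To upgrade this to equality I would invoke strong duality. The primal minimizes a proper, lower-semicontinuous, strictly convex function that is superlinear on its domain $\RR_+^{n\times m}$ over the nonempty compact polytope $\CouplingsD(\a,\b)$, so a primal minimizer exists; and when $\a,\b$ have full support the coupling $\a\transp{\b}$ is strictly positive, feasible, and lies in the relative interior of the domain of $\HD$, which is exactly the qualification condition for Fenchel--Rockafellar duality (if $\a$ or $\b$ has zero entries I would first restrict to their supports and run the same argument there). This gives $\MKD_\C^\epsilon(\a,\b)=\max_{\fD,\gD}q(\fD,\gD)$ with the maximum attained, i.e.~\eqref{eq-dual-formulation}. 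For $\epsilon=0$, the same separable-infimum computation gives $\inf_{\P\geq 0}\dotp{\P}{\,\C-\fD\oplus\gD\,}=-\iota_{\RR_+^{n\times m}}(\C-\fD\oplus\gD)=B_0(\C-\fD\oplus\gD)$, and strong duality is that of a feasible, bounded linear program.

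For~\eqref{eq-dual-formulation-bach} I would exploit a one-dimensional invariance of~\eqref{eq-dual-formulation}: replacing $(\fD,\gD)$ by $(\fD,\gD+\lambda\ones_m)$ shifts $\C-\fD\oplus\gD$ by the constant matrix $\lambda\,\ones_n\transp{\ones_m}$ and, since $\transp{\ones_m}\b=1$, adds $\lambda$ to $\dotp{\gD}{\b}$. As the range of $(\fD,\gD,\lambda)\mapsto(\fD,\gD+\lambda\ones_m)$ is all of $\RR^n\times\RR^m$, I can maximize~\eqref{eq-dual-formulation} by first maximizing over $\lambda\in\RR$ at fixed $(\fD,\gD)$ and then over $(\fD,\gD)$; the inner step is a one-variable strictly concave maximization solved in closed form, and using $B_\epsilon(\SD-\lambda\,\ones_n\transp{\ones_m})=e^{\lambda/\epsilon}B_\epsilon(\SD)$ it converts the term $B_\epsilon(\C-\fD\oplus\gD)$ into ${\min}_\epsilon(\C-\fD\oplus\gD)$, up to an additive constant fixed by $\transp{\ones_n}\a=\transp{\ones_m}\b=1$; this gives~\eqref{eq-dual-formulation-bach}. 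The same scalar-shift argument at $\epsilon=0$ turns the hard constraint $\fD\oplus\gD\leq\C$ implicit in $B_0$ into the penalty ${\min}_0(\C-\fD\oplus\gD)=\min_{i,j}(\C-\fD\oplus\gD)_{i,j}$, recovering the classical Kantorovich dual. (Alternatively,~\eqref{eq-dual-formulation-bach} could be derived from scratch by repeating the Fenchel--Rockafellar computation after deleting the redundant total-mass constraint $\transp{\ones_n}\P\ones_m=1$ from the primal.)

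The main obstacle is the strong-duality step: certifying that the $\inf$--$\sup$ exchange is legitimate. The subtlety is that the domain of $-\HD$ is the closed, non-open orthant $\RR_+^{n\times m}$, so an open-domain version of strong duality does not directly apply, and moreover $\a$ or $\b$ may have vanishing entries; I would handle both through the support-reduction plus relative-interior argument above, or, failing that, by verifying directly that $q$ is finite and concave everywhere and that its supremum is attained at a point whose induced primal candidate $e^{-\SD^\star/\epsilon}$ is feasible, which then forces equality.
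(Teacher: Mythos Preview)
Your derivation of~\eqref{eq-dual-formulation} is exactly the paper's approach, carried out in full detail: attach Lagrange multipliers $(\fD,\gD)$ to the two marginal constraints and compute the separable inner infimum. The paper's proof simply says ``the usual derivation'' and omits the computation and the strong-duality justification you provide.

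For~\eqref{eq-dual-formulation-bach} the two arguments differ in presentation but are really the same mechanism seen from opposite sides. The paper \emph{introduces} the redundant primal constraint $\sum_{i,j}\P_{i,j}=1$ and picks up an extra scalar Lagrange multiplier; you instead stay with the dual~\eqref{eq-dual-formulation} and optimize along the one-dimensional shift direction $(\fD,\gD)\mapsto(\fD,\gD+\lambda\ones_m)$. These are dual to one another: the redundancy of the total-mass constraint in the primal is precisely what creates the shift invariance in the dual, and your parameter $\lambda$ is the paper's extra multiplier. Your route has the advantage of making explicit why $\min_\epsilon$ appears (it is what survives after maximizing $\lambda\mapsto\lambda+e^{\lambda/\epsilon}B_\epsilon(\SD)$), while the paper's route makes the origin of the extra degree of freedom transparent.

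One small slip: in your parenthetical alternative you write ``deleting the redundant total-mass constraint from the primal'', but the constraint is not explicitly present in~\eqref{eq-regularized-discr}; the paper's move is to \emph{add} it (redundantly), not to delete it. Also, your hedge ``up to an additive constant'' is prudent: carrying the scalar optimization through for $\epsilon>0$ produces an extra $-\epsilon$ term, so you should track that constant explicitly rather than leave it implicit.
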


\begin{proof}
	Formula~\eqref{eq-dual-formulation} follows from the usual derivation, introducing dual variables $(\fD,\gD)$ associated to the constraints $(\P \ones_m, \P^\top \ones_n) = (\a,\b)$. 
	Formula~\eqref{eq-dual-formulation-bach} is obtained by introducing the redundant constraint $\sum_{i,j} \P_{i,j}=1$, and thus an extra dual variable associated to this constraint.
\end{proof}

These formulas show that $\MKD_\C^\epsilon(\a,\b)$ is a convex function of $(\a,\b)$ (as a maximum of affine functions). 
An advantage of~\eqref{eq-dual-formulation-bach} over~\eqref{eq-dual-formulation} is that it corresponds to the minimization of a uniformly Lipschitz function that is unconstrained even when $\epsilon=0$.
As we detail in Corollary~\ref{cor-legendre-both} below, in the case $\epsilon>0$, this alternative dual is also better because it has a bounded Hessian, and can thus be minimized by efficient solvers. 

As detailed in the following proposition, the second dual formula~\eqref{eq-dual-formulation-bach} can equivalently be stated as the explicit expression of the Fenchel--Legendre transform of the optimal transportation cost, defined as:
\eql{\label{eq-dfn-legendre-both}
	\foralls (\fD,\gD) \in \RR^n \times \RR^m, \quad
	\MKD_\C^{\epsilon,*}(\fD,\gD) \defeq \umax{(\a,\b) \in \Si_n \times \Si_m} \dotp{\fD}{\a} + \dotp{\gD}{\b}  - \MKD_\C^\epsilon(\a,\b).
}
It is important to note that for the transform~\eqref{eq-dfn-legendre-both}, we restrict $\MKD_\C^\epsilon$ to be defined on probability histograms in $\Si_n \times \Si_m$. One can also consider $\MKD_\C^\epsilon$ as being defined on $\RR_+^n \times \RR_+^m$ with the additional constraint that both masses are equal $\sum_i \a_i = \sum_j \b_j$. In this case, one needs to consider the ``classical'' dual~\eqref{eq-dual-formulation} in place of~\eqref{eq-dual-formulation-bach}.  

% \todo{Check if we put this in appendix. I think the formula for the Hessian could be nicely factorized and simplified. }

\begin{cor}\label{cor-legendre-both}
One has for $\epsilon \geq 0$
\eql{\label{eq-legendre-both}
	\foralls (\fD,\gD) \in \RR^n \times \RR^m, \quad
	\MKD_\C^{\epsilon,*}(\fD,\gD) = - {\min}_\epsilon(\C - \fD \oplus \gD). 
}
For $\epsilon>0$, $\MKD_\C^{\epsilon,*}$ is $C^\infty$ and its gradient function $\nabla \MKD_\C^{\epsilon,*}(\cdot)$ is $2/\epsilon$ Lipschitz. 
%Its value, gradient, and Hessian at $(\fD,\gD) \in \RR^n \times \RR^m$ are, writing $\uD \eqdef e^{\fD/\epsilon}$, $\vD \eqdef e^{\gD/\epsilon}$, $\K \eqdef e^{-\C/\epsilon}$, $\P \eqdef \diag(\uD) \K \diag(\vD)$, 
%$\bar \a = \uD \odot (\K \vD)$, 
%$\bar\b = \vD \odot (\K^\top \uD)$, \todo{$\ga=1/\epsilon$?}
%\begin{gather*}% \label{eq-obj-bothvar}
%\begin{split}
%  		\MKD_\C^{\epsilon,*}(\fD,\gD) &= -\epsilon \log \dotp{\uD}{\K \vD},\\ 
%		\nabla \MKD_\C^{\epsilon,*}(\fD,\gD) &= \frac{1}{\uD^T \K \vD}
%			\begin{bmatrix} \bar\a \\ \bar\b \end{bmatrix},\\
%		\nabla^2 \MKD_\C^{\epsilon,*}(\fD,\gD) &= \frac{1}{\ga\uD^T K \vD} 
%		\begin{bmatrix} 
%			A_\epsilon(\fD,\gD) & B_\epsilon(\fD,\gD) \\ 
%			B_\epsilon(\gD,\fD) & A_\epsilon(\gD,\fD)
%		\end{bmatrix},				
%\end{split}
%\\
%	\qwhereq					
%	\begin{cases}
%		A_\epsilon(\fD,\gD) =  \diag(\bar\a)- \frac{1}{\uD^\top \K \vD} \bar\a \bar\b^\top,\\
%		B_\epsilon(\fD,\gD) = \P - \frac{1}{\uD^\top \K \vD} \bar\b \bar\a^\top.
%	\end{cases}
%\end{gather*}
\end{cor}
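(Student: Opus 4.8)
\noindent\emph{Proof plan.}
The plan is to compute the Legendre transform \eqref{eq-dfn-legendre-both} in closed form and then read the regularity of $\MKD_\C^{\epsilon,*}$ directly off that formula. First I would substitute the primal definition \eqref{eq-regularized-discr} of $\MKD_\C^\epsilon$ into \eqref{eq-dfn-legendre-both}. Since $\MKD_\C^\epsilon(\a,\b)$ is a \emph{minimum} over couplings, its negative is a supremum, so $\MKD_\C^{\epsilon,*}(\fD,\gD)$ becomes a plain nested supremum over pairs $((\a,\b),\P)$ with $(\a,\b)\in\Si_n\times\Si_m$ and $\P\in\CouplingsD(\a,\b)$; no minimax exchange is involved, only the merging of two suprema. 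Because the marginal constraints force $\a=\P\ones_m$ and $\b=\P^\top\ones_n$, these pairs are in bijection with matrices $\P\in\RR_+^{n\times m}$ subject to the single scalar constraint $\sum_{i,j}\P_{i,j}=1$ (nonnegativity of $\a,\b$ being automatic); write $\Si_{nm}$ for this set. Using $\dotp{\fD}{\P\ones_m}+\dotp{\gD}{\P^\top\ones_n}=\dotp{\fD\oplus\gD}{\P}$, the transform collapses to
\[
	\MKD_\C^{\epsilon,*}(\fD,\gD)=\sup_{\P\in\Si_{nm}}\;\dotp{\fD\oplus\gD-\C}{\P}+\epsilon\HD(\P).
\]

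For $\epsilon=0$ the right-hand side is a linear program over the simplex, whose value is $-\min_{i,j}(\C-\fD\oplus\gD)_{i,j}=-{\min}_0(\C-\fD\oplus\gD)$. For $\epsilon>0$ it is the classical entropy-regularized linear program, maximized by the Gibbs coupling $\P_{i,j}\propto e^{-(\C-\fD\oplus\gD)_{i,j}/\epsilon}$ with optimal value $-{\min}_\epsilon(\C-\fD\oplus\gD)$, which is exactly \eqref{eq-legendre-both}. (Equivalently, \eqref{eq-legendre-both} is the Fenchel--Moreau biconjugation identity applied to \eqref{eq-dual-formulation-bach}: the map $(\fD,\gD)\mapsto-{\min}_\epsilon(\C-\fD\oplus\gD)$ is closed, proper and convex, and its conjugate equals $+\infty$ off $\Si_n\times\Si_m$.)

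For the regularity claims when $\epsilon>0$, I would work from the explicit formula $\MKD_\C^{\epsilon,*}(\fD,\gD)=\epsilon\log\sum_{i,j}e^{-(\C_{i,j}-\fD_i-\gD_j)/\epsilon}$, a log--sum--exp composed with an affine map, hence $C^\infty$. Differentiating once gives $\nabla_\fD\MKD_\C^{\epsilon,*}=\P\ones_m=:\a$ and $\nabla_\gD\MKD_\C^{\epsilon,*}=\P^\top\ones_n=:\b$, where $\P=\P(\fD,\gD)\in\Si_{nm}$ is the Gibbs coupling above (so $\nabla\MKD_\C^{\epsilon,*}$ returns the primal marginals); a second differentiation yields
\[
	\nabla^2\MKD_\C^{\epsilon,*}=\frac1\epsilon\left(\begin{array}{cc}\diag(\a)-\a\a^\top & \P-\a\b^\top\\ \P^\top-\b\a^\top & \diag(\b)-\b\b^\top\end{array}\right).
\]
The $2/\epsilon$ Lipschitz bound on $\nabla\MKD_\C^{\epsilon,*}$ is equivalent to this Hessian having all eigenvalues in $[0,2/\epsilon]$. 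Nonnegativity is automatic because $\MKD_\C^{\epsilon,*}$, as a Legendre transform, is convex. For the upper bound, drop the positive semidefinite rank-one matrix with blocks $\a\a^\top,\a\b^\top,\b\a^\top,\b\b^\top$ (the outer product of the stacked vector $(\a,\b)$ with itself) to get $\nabla^2\MKD_\C^{\epsilon,*}\preceq\tfrac1\epsilon$ times the matrix with diagonal blocks $\diag(\a),\diag(\b)$ and off-diagonal block $\P$; the quadratic form of that matrix at $(u,v)$ is $\sum_{i,j}\P_{i,j}(u_i+v_j)^2\le 2\sum_{i,j}\P_{i,j}(u_i^2+v_j^2)=2\big(\sum_i\a_iu_i^2+\sum_j\b_jv_j^2\big)\le 2(\|u\|^2+\|v\|^2)$, using $\a_i,\b_j\le1$.

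The only step requiring genuine care is the first one: one must check that negating the inner minimum really produces an honest nested supremum (there is no duality gap to fear, since nothing is being swapped) and that eliminating $(\a,\b)$ via the marginal constraints leaves precisely the simplex $\Si_{nm}$. Everything afterwards is the routine differential calculus of log--sum--exp together with the one-line semidefinite estimate above.
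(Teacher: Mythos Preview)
The paper gives no explicit proof of this corollary: formula~\eqref{eq-legendre-both} is presented as an immediate restatement of the dual~\eqref{eq-dual-formulation-bach} via Fenchel--Moreau biconjugation (the route you mention in your parenthetical), and the $2/\epsilon$-Lipschitz claim is simply asserted. Your plan is correct and strictly more informative than what the paper provides.

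Your primary route---substituting the primal~\eqref{eq-regularized-discr} into~\eqref{eq-dfn-legendre-both}, merging the two suprema, and eliminating $(\a,\b)$ via the marginal map to reduce to a single entropy-regularized linear program over $\Si_{nm}$---is a clean self-contained derivation that does not rely on the preceding dual proposition at all. The paper's implicit route leans on~\eqref{eq-dual-formulation-bach} and tacitly requires checking that the conjugate of $(\fD,\gD)\mapsto-{\min}_\epsilon(\C-\fD\oplus\gD)$ over all of $\RR^n\times\RR^m$ equals $\MKD_\C^\epsilon$ on $\Si_n\times\Si_m$ and $+\infty$ off it; your direct computation sidesteps this. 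For the Lipschitz bound, your explicit Hessian and the estimate $\sum_{i,j}\P_{i,j}(u_i+v_j)^2\le 2(\|u\|^2+\|v\|^2)$ are exactly what is needed; the paper supplies nothing here.

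One minor arithmetic caveat you will hit when you carry this out: with the paper's entropy convention $\HD(\P)=-\sum\P_{i,j}(\log\P_{i,j}-1)$, the optimal value of $\sup_{\P\in\Si_{nm}}\dotp{\fD\oplus\gD-\C}{\P}+\epsilon\HD(\P)$ is $-{\min}_\epsilon(\C-\fD\oplus\gD)+\epsilon$, not $-{\min}_\epsilon(\C-\fD\oplus\gD)$. The same additive $\epsilon$ already separates~\eqref{eq-dual-formulation} from~\eqref{eq-dual-formulation-bach} in the paper, and it is invisible to the gradient and Hessian, so none of the regularity claims are affected.
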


%\begin{proof}
%	\todo{write me}
%	The Lipschitz continuity of the gradient can be obtained by showing that the Hessian's trace can be upper-bounded by $2/\epsilon$ by noticing that the trace of both $A_\ga(\fD,\gD)$ and $A_\ga(\gD,\fD)$ is upper-bounded by $\uD^T (\K\vD)$.
%\end{proof}

%%%%
\paragraph{$\c$-transforms and Sinkhorn}

A useful generalization of convexity used in optimal transport theory comes from so-called $\c$-transforms \cite[\S1.2]{santambrogio2015optimal}. $\c$-transforms arise naturally when minimizing explicitly the dual OT problem with respect to only one of its two variables, while keeping the other fixed. That tool is important from a theoretical perspective to prove the regularity of the dual variables for generic measures (see Section~\ref{sec-gen-measure}) and also to define so-called semi-discrete numerical solver, as briefly discussed in Section~\ref{sec-pw}.  

$\c$-transforms can be generalized to the regularized case $\epsilon \geq 0$ by defining
\begin{align}
	\label{eq-disc-c-transfo}
	\foralls \gD \in \RR^m, \;
	\foralls i \in \range{n}, \quad
	\gD^{\bar \c,\epsilon}_i &\eqdef \epsilon \log(\a_i) + {\min}_\epsilon( \C_{i,\cdot} - \gD ), \\
	\foralls \fD \in \RR^n, \;
	\foralls j \in \range{m}, \quad
	\fD^{\,\c,\epsilon}_j &\eqdef \epsilon \log(\b_j) + {\min}_\epsilon( \C_{\cdot,j} - \fD ), 
\end{align}
where we denoted $\min_0=\min$ the usual minimum of a vector and for $\epsilon>0$, the soft-minimum is defined as
\eq{
	 {\min}_\epsilon \mathbf{u} \eqdef - \epsilon \log \sum_i e^{-\mathbf{u}_i/\epsilon}.
}
One can check that for a fixed $\fD$ (resp. a fixed $\gD$) $\gD = \fD^{\,\c,\epsilon}$ (resp. $\fD=\gD^{\bar \c,\epsilon}$) minimizes~\eqref{eq-dual-formulation} with respect to $\gD$ (resp. with respect to $\fD$). 

The widely used Sinkhorn algorithm to solve the regularized OT problem can be simply interpreted as a block-coordinate ascent method, where, starting from some initialization $\fD^{(0)} \in \RR^n$, one defines, for $\ell \geq 0$
\eq{
	\gD^{(\ell)} \eqdef (\fD^{(\ell)})^{\c,\epsilon}
	\qandq
	\fD^{(\ell+1)} \eqdef (\gD^{(\ell)})^{\bar\c,\epsilon}.
}
For $\epsilon=0$, alternating these $\c$-transforms does not converge because the dual problem is not smooth, and one immediately reaches a stationary point $(\fD^{(\ell+1)},\gD^{(\ell+1)})=(\fD^{(\ell)},\gD^{(\ell)})$ for $\ell=2$.
In sharp contrast, for $\epsilon>0$, these iterations are known to convergence at linear speed, with a dependence given by $\epsilon$ and the values of $\c$, see~\cite{franklin1989scaling}.

%%%%
\paragraph{Semi-dual}

Explicitly minimizing with respect to one of the two variables appearing in~\eqref{eq-dual-formulation} defines a new dual optimization problem, that we coin ``semi-dual'' because it underlies all semi-discrete OT methods in the case $\epsilon=0$. 

\begin{prop}
One has for $\epsilon \geq 0$
\eql{\label{eq-semi-dual-formulation}
	\MKD_\C^\epsilon(\a,\b) = \umax{\fD \in \RR^n}
		 \dotp{\fD}{\a} + \dotp{\fD^{\,\c,\epsilon}}{\b} - \epsilon 
		  = \umax{\gD \in \RR^m}
		 \dotp{\gD^{\bar \c,\epsilon}}{\a} + \dotp{\gD}{\b} - \epsilon.
} 
\end{prop}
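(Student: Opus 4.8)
The plan is to obtain~\eqref{eq-semi-dual-formulation} from the full dual~\eqref{eq-dual-formulation} by carrying out the inner maximization over one of the two variables in closed form. Fix $\fD \in \RR^n$ and set
\eq{
	\psi_\fD(\gD) \eqdef \dotp{\fD}{\a} + \dotp{\gD}{\b} + B_\epsilon(\C - \fD \oplus \gD),
}
so that, by~\eqref{eq-dual-formulation}, $\MKD_\C^\epsilon(\a,\b) = \umax{\fD \in \RR^n} \umax{\gD \in \RR^m} \psi_\fD(\gD)$. The claim to be established is that $\umax{\gD \in \RR^m} \psi_\fD(\gD) = \dotp{\fD}{\a} + \dotp{\fD^{\,\c,\epsilon}}{\b} - \epsilon$ for every $\fD$; this yields the first equality in~\eqref{eq-semi-dual-formulation}, and the second one follows from the mirror computation in which $\fD$ is maximized out for a fixed $\gD$, which by the same steps produces the maximizer $\fD = \gD^{\bar \c,\epsilon}$ and the same additive constant $-\epsilon$.

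For $\epsilon > 0$, the function $\gD \mapsto \psi_\fD(\gD)$ is smooth and concave, being the sum of the affine term $\dotp{\gD}{\b}$ and of $-\epsilon \sum_{i,j} e^{-(\C_{i,j}-\fD_i-\gD_j)/\epsilon}$, which is a sum of concave functions of $\gD$; hence any critical point is a global maximizer. From $\partial_{\gD_j}\psi_\fD = \b_j - \sum_i e^{-(\C_{i,j}-\fD_i-\gD_j)/\epsilon}$, the stationarity equations $\nabla_\gD \psi_\fD = 0$ read $\b_j = e^{\gD_j/\epsilon}\sum_i e^{-(\C_{i,j}-\fD_i)/\epsilon}$ for each $j$, so $\gD_j = \epsilon\log\b_j - \epsilon\log\sum_i e^{-(\C_{i,j}-\fD_i)/\epsilon} = \epsilon\log\b_j + {\min}_\epsilon(\C_{\cdot,j}-\fD) = \fD^{\,\c,\epsilon}_j$, which recovers the $\c$-transform~\eqref{eq-disc-c-transfo} (and in particular reproves that $\gD = \fD^{\,\c,\epsilon}$ solves the inner maximization). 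At this optimal $\gD$ the same stationarity relation gives $\sum_i e^{-(\C_{i,j}-\fD_i-\gD_j)/\epsilon} = \b_j$ for every $j$, hence $\sum_{i,j} e^{-(\C_{i,j}-\fD_i-\gD_j)/\epsilon} = \sum_j \b_j = 1$ since $\b \in \Si_m$, and therefore $B_\epsilon(\C - \fD \oplus \gD) = -\epsilon$. Substituting this optimal $\gD$ back into $\psi_\fD$ proves the claim in this case.

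For $\epsilon = 0$, $B_0(\C - \fD \oplus \gD)$ equals $0$ when $\gD_j \leq \C_{i,j}-\fD_i$ for all $i,j$ and $-\infty$ otherwise, so $\umax{\gD} \psi_\fD$ reduces to maximizing the linear functional $\gD \mapsto \dotp{\gD}{\b}$ over the nonempty box $\enscond{\gD \in \RR^m}{\foralls j,\, \gD_j \leq \min_i (\C_{i,j}-\fD_i)}$, on which it is bounded above since $\b \geq 0$ and the upper bounds are finite. This maximum is attained at $\gD_j = \min_i(\C_{i,j}-\fD_i) = {\min}_0(\C_{\cdot,j}-\fD) = \fD^{\,\c,0}_j$ (coordinates $\gD_j$ with $\b_j = 0$ do not affect the value, and this choice is admissible), with value $\dotp{\fD}{\a} + \dotp{\fD^{\,\c,0}}{\b}$; since $\epsilon = 0$ this is exactly the claimed expression.

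The $\epsilon > 0$ case is a short computation once concavity of $\psi_\fD$ in $\gD$ is noted, so the only delicate points are the $\epsilon = 0$ boundary issues — nonemptiness of the inner feasible box, the harmlessness of the free coordinates $\gD_j$ when $\b_j = 0$, and the verification that the supremum is genuinely attained at the $\c$-transform — together with the bookkeeping, in both cases, that turns the value of $B_\epsilon$ at the optimum into the additive constant $-\epsilon$ that distinguishes the semi-dual~\eqref{eq-semi-dual-formulation} from its classical, unregularized form.
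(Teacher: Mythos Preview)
Your argument is correct and follows exactly the approach the paper indicates: the paper does not give a detailed proof of this proposition, but states just before it that ``explicitly minimizing with respect to one of the two variables appearing in~\eqref{eq-dual-formulation} defines'' the semi-dual, and notes in the preceding paragraph on $\c$-transforms that $\gD=\fD^{\,\c,\epsilon}$ realizes this inner maximization. You have simply filled in these details---computing the critical point of $\psi_\fD$, identifying it with the $\c$-transform, and evaluating $B_\epsilon$ at the optimum to produce the $-\epsilon$ constant---and handled the $\epsilon=0$ case separately, which is precisely what the paper leaves implicit.
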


In the following, we denote
\eq{
	\FD_\b(\a) \eqdef \FD_\a(\b) \eqdef \MKD_\C^\epsilon(\a,\b).
}
Formula~\eqref{eq-semi-dual-formulation} equivalently reads as an explicit formula for the Fenchel--Legendre transform:
\eq{
	\foralls \fD \in \RR^n, \quad
	\FD_\b^*(\fD) \eqdef \umax{\a \in \Si_n} \dotp{\fD}{\a} - \FD_\b(\a), 
}
as detailed in the following Corollary.

\begin{cor}\label{cor-legendre-single}
One has for $\epsilon \geq 0$, 
\eql{\label{eq-legendre}
	\FD_\b^*(\fD) = - \dotp{\fD^{\,\c,\epsilon}}{\b} + \epsilon 
	\qandq
	\FD_\a^*(\gD) = - \dotp{\gD^{\bar \c,\epsilon}}{\a} + \epsilon .
}
For $\epsilon>0$, $\FD_\b^*$ is $C^\infty$. Its gradient function $\nabla \FD_\b^*(\cdot)$ is $1/\epsilon$ Lipschitz. Its value, gradient, and Hessian at $\fD \in \RR^n$ are, writing $\uD \eqdef e^{\fD/\epsilon}$, $\K \eqdef e^{-\C/\epsilon}$, $\vD \eqdef \frac{\b}{\K^\top  \uD}$ and $\P \eqdef \diag(\uD) \K \diag(\vD)$,
	\begin{equation}\label{eq-obj-dual}
		\begin{aligned}
		\FD_\b^*(\fD) &= \epsilon \left( \HD(\b)+\dotp{\b}{\log \K^\top \uD}\right),\, \nabla \FD_\b^*(\fD) = \uD \circ (\K  \vD) \in \Sigma_n,\\
		\nabla^2 \FD_\b^*(\fD)  &= \frac{1}{\epsilon}\diag(\uD \circ \K \vD) - \frac{1}{\epsilon}\P \diag(\b)^{-1} \P^\top.
		\end{aligned}
	\end{equation}
%	\todo{Gab: maybe need to write the sub-gradient for $\epsilon=0$ since it is referred to in the algorithmic section? Or simply refer to the paper of Guillaume where this is explained. }
\end{cor}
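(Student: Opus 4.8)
The plan is to read the first identity in~\eqref{eq-legendre} directly off the semi-dual formula~\eqref{eq-semi-dual-formulation} by a Fenchel--Moreau biconjugation argument, and then, for $\epsilon>0$, to obtain the $C^\infty$ regularity and the closed forms~\eqref{eq-obj-dual} by plain differentiation of log-sum-exp expressions. For Step~1, set $h(\fD)\eqdef-\dotp{\fD^{\,\c,\epsilon}}{\b}+\epsilon$. Because $\fD\mapsto\C_{\cdot,j}-\fD$ is affine and ${\min}_\epsilon$ is concave (a log-sum-exp when $\epsilon>0$, a pointwise minimum of affine maps when $\epsilon=0$), every coordinate $\fD\mapsto\fD^{\,\c,\epsilon}_j$ is concave and continuous; hence $h$ is finite, convex and continuous on $\RR^n$, in particular proper, convex and lower semicontinuous. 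Now~\eqref{eq-semi-dual-formulation} states exactly that $\FD_\b(\a)=\sup_{\fD}\dotp{\fD}{\a}-h(\fD)=h^*(\a)$ for every $\a\in\Si_n$. I would then check that $h^*(\a)=+\infty$ for $\a\notin\Si_n$: the translation equivariance $(\fD+t\ones_n)^{\,\c,\epsilon}=\fD^{\,\c,\epsilon}-t\ones_n$ gives $h(\fD+t\ones_n)=h(\fD)+t$ (using $\b\in\Si_m$), so $\dotp{\fD}{\a}-h(\fD)$ picks up the term $t(\sum_i\a_i-1)$ and is unbounded above whenever $\sum_i\a_i\neq1$; and sending $\fD$ to $+\infty$ or $-\infty$ along the $i_0$-th coordinate direction makes it unbounded above whenever $\a_{i_0}<0$. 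Thus $h^*=\FD_\b+\iota_{\Si_n}$, so $\FD_\b^*(\fD)=\sup_{\a\in\Si_n}\dotp{\fD}{\a}-h^*(\a)=\sup_{\a\in\RR^n}\dotp{\fD}{\a}-h^*(\a)=h^{**}(\fD)=h(\fD)$ by Fenchel--Moreau, which is the first claim in~\eqref{eq-legendre}; the second follows by exchanging the roles of $(\a,\fD,n)$ and $(\b,\gD,m)$ and transposing $\C$.

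For Step~2, fix $\epsilon>0$ and write $\uD=e^{\fD/\epsilon}$, $\K=e^{-\C/\epsilon}$. Then ${\min}_\epsilon(\C_{\cdot,j}-\fD)=-\epsilon\log(\K^\top\uD)_j$, so $h$ is manifestly $C^\infty$; substituting this into the definition of $h$ and absorbing the $\b_j\log\b_j$ terms into $\epsilon\HD(\b)$ yields $\FD_\b^*(\fD)=\epsilon(\HD(\b)+\dotp{\b}{\log\K^\top\uD})$. Differentiating, using $\partial\uD_i/\partial\fD_k=\uD_i\delta_{ik}/\epsilon$, produces $\nabla\FD_\b^*(\fD)=\uD\circ(\K\vD)$ with $\vD=\b/(\K^\top\uD)$; this vector is $\P\ones_m$ for $\P=\diag(\uD)\K\diag(\vD)$, and since $\P^\top\ones_n=\vD\circ(\K^\top\uD)=\b$, it is non-negative with total mass $\sum_j\b_j=1$, hence lies in $\Si_n$. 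A second differentiation gives the Hessian: the derivative landing on the prefactor $\uD_k$ contributes $\frac1\epsilon\diag(\uD\circ\K\vD)$, and the derivative landing on the denominator $(\K^\top\uD)_j$ contributes $-\frac1\epsilon\sum_j\b_j^{-1}\P_{\cdot,j}\P_{\cdot,j}^\top=-\frac1\epsilon\P\diag(\b)^{-1}\P^\top$, which is~\eqref{eq-obj-dual}.

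For the Lipschitz bound, convexity of $\FD_\b^*$ (it is a Legendre transform) gives $\nabla^2\FD_\b^*\succeq0$ — alternatively this is immediate from Cauchy--Schwarz applied to each column of $\P$, using $\sum_k\P_{k,j}=\b_j$. Since moreover $\P\diag(\b)^{-1}\P^\top\succeq0$, one gets $\nabla^2\FD_\b^*(\fD)\preceq\frac1\epsilon\diag(\uD\circ\K\vD)\preceq\frac1\epsilon\mathrm{I}_n$, the last inequality because $\uD\circ\K\vD=\nabla\FD_\b^*(\fD)\in\Si_n$ has entries in $[0,1]$; hence $\nabla\FD_\b^*$ is $1/\epsilon$-Lipschitz. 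The one genuinely delicate point is Step~1 — pinning down that $h^*$ is \emph{exactly} $\FD_\b+\iota_{\Si_n}$, i.e.\ that the supremum defining $h^*$ really blows up off the simplex — after which everything reduces to Fenchel--Moreau together with the routine (if slightly tedious) log-sum-exp differentiations of Step~2.
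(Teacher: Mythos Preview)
Your argument is correct and complete. The paper itself does not prove this corollary: its entire proof reads ``We refer to~\cite{2016-Cuturi-siims} for a proof of these formula,'' so there is no in-paper argument to compare against. Your route---identifying $h(\fD)=-\dotp{\fD^{\,\c,\epsilon}}{\b}+\epsilon$ as a proper convex lsc function, checking via the translation equivariance $(\fD+t\ones_n)^{\,\c,\epsilon}=\fD^{\,\c,\epsilon}-t\ones_m$ and a coordinatewise limit that $h^*=\FD_\b+\iota_{\Si_n}$, then invoking Fenchel--Moreau, and finally differentiating the log-sum-exp for the value/gradient/Hessian and bounding $\nabla^2\FD_\b^*\preceq\tfrac1\epsilon\diag(\nabla\FD_\b^*)\preceq\tfrac1\epsilon I_n$---is the natural one and is essentially what the cited reference does as well. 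The only step worth a second look is the $\a_{i_0}<0$ case: you send $\fD_{i_0}\to-\infty$, and one should note explicitly that each $\fD^{\,\c,\epsilon}_j$ then stays bounded (the $i_0$-th term drops out of the soft-min since $e^{(\fD_{i_0}-\C_{i_0,j})/\epsilon}\to0$, and for $\epsilon=0$ the hard min is eventually attained at some $i\neq i_0$), so that $\dotp{\fD}{\a}-h(\fD)\to+\infty$; you state this conclusion but the one-line justification would make the write-up airtight.
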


\begin{proof}
	We refer to~\cite{2016-Cuturi-siims} for a proof of these formula. 
\end{proof}

%%%%%%%%%%%%%%%%%%%%%%%%%%%%%%%%%%%%%%%%%%%%%%%%%%%%%%%%%%%%%%%%%%%%%%%%%%%%%%%
\subsection{Generalization to Arbitrary Measures}
\label{sec-gen-measure}

An important feature of OT is that it works in a seamless way for discrete measure and continuous density of mass, using convex optimization over the space of Radon measures. We consider two metric spaces $(\Xx,\Yy)$, and Radon measures on these spaces, as the dual space of continuous functions $(\Cc(\X),\Cc(\Y))$. We denote $(\al,\be) \in \Mm_+^1(\X) \times \Mm_+^1(\Y)$ the set of positive probability measures, so that $\al(\X)=\be(\Y)=1$. 
The discrete setting of the previous section is recovered by restricting measures $\al,\be$ to be weighted sums of Dirac masses
\eq{
	\al=\sum_i \a_i \de_{x_i}
	\qandq
	\be=\sum_j \b_j \de_{y_j}. 
} 

%%%%
\paragraph{Regularization using relative entropy}

One can consider arbitrary measures by replacing the discrete entropy by the relative entropy with respect to the product measure $\d(\al\otimes\be)(x,y) \eqdef \d\al(x)\d\be(y)$, and propose a regularized counterpart to~\eqref{eq-regularized-discr} using
\eql{\label{eq-entropic-generic}
	\MK_\c^\epsilon(\al,\be) \eqdef 
	\umin{\pi \in \Couplings(\al,\be)}
		\int_{\X \times \Y} c(x,y) \d\pi(x,y) + \epsilon \KL(\pi|\al\otimes\be)
}
where the relative entropy is a generalization of the discrete Kullback-Leibler divergence
\eql{\label{eq-defn-rel-entropy}
	\KL(\pi|\xi) \eqdef \int_{\X \times \Y} \log\Big( \frac{\d \pi}{\d\xi}(x,y) \Big) \d\pi(x,y)
	  +\int_{\X \times \Y} (\d\xi(x,y)-\d\pi(x,y)), 
}
and by convention $\KL(\pi|\xi)=+\infty$ if $\pi$ does not have a density $\frac{\d \pi}{\d\xi}$ with respect to $\xi$. Formulation~\eqref{eq-entropic-generic} was initially proposed in~\cite{genevay2016stochastic}.

For generic and not necessarily discrete input measures $(\al,\be)$, the dual problem~\eqref{eq-dual-formulation} reads
% \MC{j'ai chang\'e le sens de $\f \oplus \g -\c$ pour être pareil que le cas discret}
\begin{align*}\label{eq-dual-entropic}
	\MK_\c^\epsilon(\al,\be)  &= \usup{(\f,\g) \in \Cc(\X)\times\Cc(\Y)} \int_\X \f \d\al + \int_\Y \g \d\be 
		 + B_\epsilon(\c- \f \oplus \g ), \\
	&= \usup{(\f,\g) \in \Cc(\X)\times\Cc(\Y)} \int_\X \f \d\al + \int_\Y \g \d\be 
		+ {\min}_\epsilon(\c - \f \oplus \g ), 
\end{align*}
where $B_0 \eqdef - \iota_{\Cc}$ where $\Cc = \enscond{S \in \Cc(\X \times \Y)}{\forall (x,y), S(x,y) \geq 0}$, 
and for $\epsilon>0$
\eq{
	 B_\epsilon(S) = - \epsilon \int_{\X\times\Y} e^{ \frac{-S(x,y)}{\epsilon} } \d\al(x)\d\be(y)
}
%This corresponds to a smoothing of the constraint $\Potentials(\c)$ appearing in the original problem~\eqref{eq-dual-generic}, which is retrieved in the limit $\epsilon \rightarrow 0$. ---> pas besoin?? vu que \Potentials n'est pas introduit.

The definition~\eqref{eq-disc-c-transfo} of the discrete $\c$-transforms now becomes in the general setting
\begin{align}\label{eq-c-transform}
	\foralls y \in \Y, \quad
	\f^{\c,\epsilon}(y) &\eqdef {\inf}_{\epsilon,\al}  \c(\cdot,y) - \f, \\ 
	\foralls x \in \X, \quad
	\g^{\bar\c,\epsilon}(x) &\eqdef {\inf}_{\epsilon,\be} \c(x,\cdot) - \g, 
\end{align}
where we denoted $\bar\c(y,x) \eqdef c(x,y)$. For $\epsilon=0$, ${\inf}_{0,\al}=\inf$ is the usual infimum, while for $\epsilon>0$
\eq{
	\foralls h \in \Cc(\X), \quad {\inf}_{\epsilon,\al} h = -\epsilon \log \int_\X e^{h(x)/\epsilon} \d\al(x) 
}
and similarly for ${\inf}_{\epsilon,\be}$.
Figure~\ref{fig-c-transform-discrete-eps} displays the influence of $\epsilon$ on these $\c$-transforms. 
The semi-dual formulation~\eqref{eq-semi-dual-formulation} then reads for general measures
\eql{\label{eq-semi-dual-formulation-gen}
\begin{aligned}
	\MK_\c^\epsilon(\al,\be) &= 
		\umax{\f \in \Cc(\X)} \int_\X \f(x) \d\al(x)  + \int_\Y \f^{\c,\epsilon}(y) \d\be(y)  \\
		&= \umax{\g \in \Cc(\Y)} \int_\X \g^{\bar \c,\epsilon}(x) \d\al(x) + \int_\Y \g(y) \d\be(y) .
\end{aligned}		 
}

% G/B/D/H \fbox
\newcommand{\MyFigCTransEps}[1]{\includegraphics[width=.24\linewidth,trim=63 40 48 30,clip]{c-transform/c-transform-eps#1}}
\begin{figure}
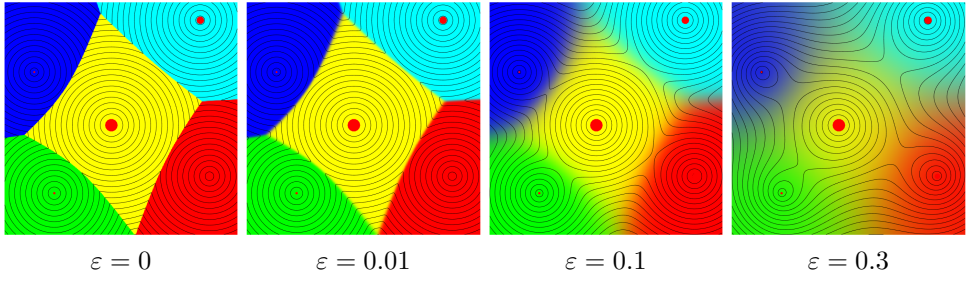

\centering
\begin{tabular}{@{}c@{\hspace{1mm}}c@{\hspace{1mm}}c@{\hspace{1mm}}c@{}}
\MyFigCTransEps{0}&
\MyFigCTransEps{1}&
\MyFigCTransEps{10}&
\MyFigCTransEps{30}\\
$\epsilon=0$ & $\epsilon=0.01$ & $\epsilon=0.1$ & $\epsilon=0.3$ 
\end{tabular}
\caption{\label{fig-c-transform-discrete-eps}
Examples of entropic semi-discrete $\bar c$-transforms $\gD^{\bar \c,\epsilon}$ in 2-D, for ground cost $c(x,y)=\norm{x-y}$ for varying $\epsilon$. The black curves are the level sets of the function $\gD^{\bar \c,\epsilon}$, while the colors indicate the smoothed indicator function of the Laguerre cells $\chi_j^\epsilon$.
The red points are at locations $y_j \in \RR^2$, and their size is proportional to $\gD_j$. 
}
\end{figure}

%%%%
\paragraph{Semi-discrete solvers}

The semi-discrete case, where $\al$ has a density (with respect to Lebesgue measure) while $\be=\sum_{j=1}^m \b_j \de_{y_j}$ is discrete, is interesting from a computational point of view. In this setting, one can solve the problem through the lens of a variable $\gD = ( g(y_j) )_{j=1}^m \in \RR^m$, by imposing $f=\g^{\bar\c,\epsilon}$ where 
\eq{
	\gD^{\bar\c,\epsilon}(x) \eqdef 
	\choice{
		-\epsilon \log \sum_{j} e^{ -\frac{\c(x,y_j) - \gD_j}{\epsilon} } \qforq \epsilon>0, \\
		\min_{j} \c(x,y_j) - \gD_j \qifq \epsilon = 0.
	}
}
The semi-dual problem then read in this semi-discrete case 
\eql{\label{eq-semi-dual-discr}
	\MK_\c^\epsilon(\al,\be) = 
		\umax{\gD \in \RR^m}
			% \Ee(\gD) \eqdef 
			\Ee^\epsilon(\gD) \eqdef \int_\X \gD^{\bar \c,\epsilon}(x) \d\al(x) + \sum_j \gD_j \b_j.
}
This optimization problem is attractive because the objective function is written as an integral against measure $\al$, so it can be solved using stochastic gradient descent as detailed in~\cite{genevay2016stochastic}.
The gradient of this functional, for any $\epsilon \geq 0$ reads
\eql{\label{eq-grad-semid-entrop}
	\foralls j \in \range{m}, \quad
	\nabla\Ee^\epsilon(\gD)_j = - \int_{\X} \chi_j^\epsilon(x) \d\al(x) + \b_j, 
}
where $\chi_j^\epsilon$ is a smoothed version of the indicator $\chi_j^0$ of the Laguerre cell $\Laguerre_{j}(\gD)$
\eql{\label{eq-smoothed-indic}
	\chi_j^\epsilon(x) = 
	\frac{
		e^{\frac{-\c(x,y_j) + \gD_j}{\epsilon}}
	}{
		\sum_\ell e^{\frac{-\c(x,y_\ell) + \gD_\ell}{\epsilon}}
	}.
}

In the case $\epsilon=0$, one can introduce the Laguerre cells
\eq{
	\Laguerre_{j}(\gD) \eqdef \enscond{x \in \X}{ \foralls j' \neq j, \c(x,y_j) - \gD_j \leq \c(x,y_{j'}) - \gD_{j'} }
}
which induce a disjoint decomposition of $\X = \bigcup_j \Laguerre_{j}(\gD)$. 
When $\gD$ is constant, the Laguerre cells decomposition corresponds to the Voronoi diagram partition of the space. 
Note that $\chi_j^0$ defined in~\eqref{eq-smoothed-indic} (i.e. for $\epsilon=0$) is the indicator function of the Laguerre cell $\Laguerre_{j}(\gD)$. 
A simple algorithm to solve~\eqref{eq-semi-dual-discr} is a gradient descent 
\eql{\label{eq-grad-desc}
	\IIT{\gD} \eqdef (1-\tau_\ell) \IT{\gD} + \tau_\ell \int_{\Laguerre_{j}(\gD)} \d\al(x)
}
for some step size $\tau_\ell>0$, which only requires the computation of the Laguerre cells. 
% More advanced algorithm making use of second order information on the function (such as quasi-Newton) can be used, see~\cite{Merigot11}.
%
In the special case $\c(x,y)=\norm{x-y}^2$, the decomposition in Laguerre cells is also known as a ``power diagram''. 
The cells are polyhedral and can be computed efficiently using computational geometry algorithms, see~\cite{aurenhammer1987power}. 
The most widely used algorithm relies on the fact that the power diagram of points in $\RR^\dim$ is equal to the projection on $\RR^\dim$ of the convex hull of the set of points $( (y_j,\norm{y_j}^2 - \gD_j) )_{j=1}^m \subset \RR^{\dim+1}$. There are numerous algorithms to compute convex hulls, for instance that of~\cite{chan1996optimal} in 2-D and 3-D has complexity $O(m \log(Q))$ where $Q$ is the number of vertices of the convex hull.

\newcommand{\MyFigSemiD}[1]{\includegraphics[width=.19\linewidth]{semidiscrete-gd/#1}}
\begin{figure}
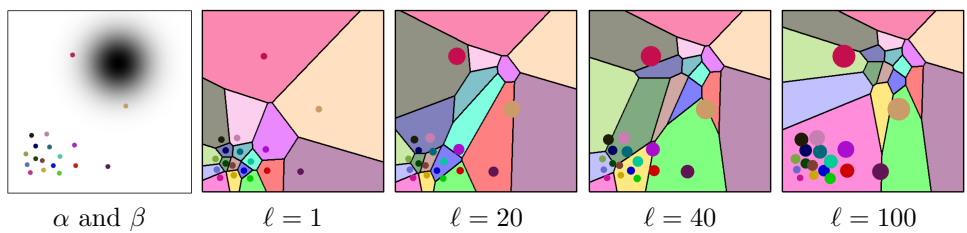

\centering
\begin{tabular}{@{}c@{\hspace{1mm}}c@{\hspace{1mm}}c@{\hspace{1mm}}c@{\hspace{1mm}}c@{}}
\MyFigSemiD{inputs} &
\MyFigSemiD{001} &
\MyFigSemiD{020} &
\MyFigSemiD{040} &
\MyFigSemiD{100}   \\ 
$\al$ and $\be$ &
$\ell=1$ &
$\ell=20$ &
$\ell=40$ &
$\ell=100$ 
\end{tabular}
\caption{\label{fig-semi-discr}
Iterations of the gradient descent~\eqref{eq-grad-desc} to solve the semi-discrete OT (when $\epsilon=0$).
The support $(y_j)_j$ of the discrete measure $\be$ is indicated by the colored points, while the continuous measure $\al$ is the uniform measure on a square. 
The colored cells display the Laguerre partition $( \Laguerre_{j}( \IT{\gD} ) )_j$ where $\IT{\gD}$ is the discrete dual potential computed at iteration $\ell$. 
}
\end{figure}

% \todo{Gab: re-written paragraph below, needs polishing. }

The entropic-regularized semi-discrete formulation~\eqref{eq-semi-dual-discr}, for $\epsilon>0$, corresponds to the minimization of a smooth functional. Indeed, one can generalize Corollary~\ref{cor-legendre-both} to arbitrary measures (see~\cite{genevay2016stochastic}) and in particular the Hessian is upper-bounded by $1/\epsilon$, paving the way for the use of off-the-shelf first and second order optimization methods. In practice, similarly to~\cite{Merigot11} in the un-regularized case, we advocate for the use of a quasi-Newton (L-BFGS) solver. We refer to Section~\ref{sec-algo} for a more detailed discussion in the case of barycenters.
Note that other methods exploiting second order schemes with some regularization have been studied by~\cite{knight2013fast,sugiyama2017tensor,cohen2017matrix,allen2017much}.

\section{Smooth dual algorithms for the WBP}\label{sec:dualalgo}

We now use the semi-dual formulation the Wasserstein distance detailed in Section~\ref{sec-duals-semi-duals} to solve the Wasserstein Barycenter Problem (WBP).
In this section, histograms correspond to measures defined on the same space $\X=\Y$, and thus in the discrete setting, we impose $n=m$. The usual way to proceed is to take $\C_{i,j}=d(x_i,x_j)^\rho$ where $\rho \geq 1$ is some exponent, and $d$ is a distance on the underlying space. Then $(\MKD_\C^0)^{1/\rho}$ defines the so-called (discrete) Wasserstein distance on the simplex $\Si_n$, and $(\MK_\c^0)^{1/\rho}$ metrizes the weak-* convergence of measures (which corresponds to the convergence in law of random variables).

%%%%%%%%%%%%%%%%%%%%%%%%%%%%%%%%%%%%%%%%%%%%%%%%%%%%%%%%%%%%%
%SEC 3.1
\subsection{Smooth dual formulation of the WBP}

Following the introduction of the WBP by~\cite{Carlier_wasserstein_barycenter},~\cite{cuturi2014fast} introduced the smoothed WBP with $\epsilon$-entropic regularization ($\epsilon$-WBP) as
\eql{\label{eq-variational-barycenter-discrete}
	\umin{\a \in \Si_n}
		\sum_{k=1}^N \bweight_k \MKD_\C^\epsilon(\a,\b_k) = \sum_{k=1}^N \bweight_k  \FD_{\b_k}(\a),
}
where $(\b_1,\ldots ,\b_N)$ is a family of histograms in $\Si_n$ and $\bweight \in \Si_N$ is a set of weights. When $\epsilon=0$, the $\epsilon$-WBP is exactly the WBP. In that case, problem \eqref{eq-variational-barycenter-discrete} is in fact a linear program, as discussed later in Section~\ref{sec:wassbarprob}. When $\epsilon>0$ the $\epsilon$-WBP is a \emph{strictly} convex optimization problem that admits a unique solution, which can be solved with a simple gradient descent as advocated by \cite{cuturi2014fast}. They show that the $N$ gradients $\left[\nabla \FD_{\b_k}(\a) \right]_{k \leq \N}$ can be computed at each iteration by solving $\N$ Sinkhorn matrix-scaling problems. Because these gradients are themselves the result of a numerical optimization procedure, the problem of choosing a thresholds to obtain sufficient accuracy on these gradients arises.
We take here a different route to solve the $\epsilon$-WBP, which can be  interpreted either as a smooth alternative to the dual WBP studied by~\cite{Carlier-NumericsBarycenters} or the dual counterpart to the smoothed WBP of \cite{cuturi2014fast}.

%thm 3.1
\begin{theorem}\label{prop-dual-energy}
	The barycenter $\a^\star$ solving~\eqref{eq-variational-barycenter-discrete} satisfies 
	\eql{\label{eq-primal-dual-relationship}
		\foralls k = 1,\ldots,\N, \quad 
		\a^\star = \nabla \FD_{\b_k}^*(\fD_k^\star),
	}
	where $( \fD_k^\star )_k$ are any solution of the smoothed dual WBP:
	\eql{\label{eq-dual-pbm}
		\umin{ \fD_1,\ldots, \fD_N\in\RR^n} \sum_k \bweight_k \FD_{\b_k}^*(\fD_k)
		\qstq \sum_k \bweight_k \fD_k = 0.
	}
\end{theorem}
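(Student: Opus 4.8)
The plan is to show that~\eqref{eq-variational-barycenter-discrete} and~\eqref{eq-dual-pbm} form a pair of convex problems with no duality gap, and then to read off the primal--dual relation~\eqref{eq-primal-dual-relationship} from the equality case of the Fenchel--Young inequality; throughout I assume (as is implicit in the statement) that $\bweight_k>0$ for every $k$. First I would record the relevant properties of $\FD_{\b_k}=\MKD_\C^\epsilon(\cdot,\b_k)$: by the semi-dual formula~\eqref{eq-semi-dual-formulation} it is a supremum of affine functions of $\a$, hence closed convex; it is proper, being finite on the nonempty set $\Si_n$; and its effective domain is exactly $\Si_n$, since $\CouplingsD(\a,\b_k)=\emptyset$ for $\a\notin\Si_n$. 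Consequently $\FD_{\b_k}^{**}=\FD_{\b_k}$, the transform $\FD_{\b_k}^*$ appearing in Corollary~\ref{cor-legendre-single} is the genuine Fenchel conjugate, and the Fenchel--Young inequality $\FD_{\b_k}(\a)+\FD_{\b_k}^*(\fD)\geq\dotp{\fD}{\a}$ holds for all $\a,\fD\in\RR^n$, with equality if and only if $\fD\in\partial\FD_{\b_k}(\a)$, equivalently $\a\in\partial\FD_{\b_k}^*(\fD)$.

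Weak duality follows by summation: for any $\a\in\Si_n$ and any $(\fD_k)_k$ with $\sum_k\bweight_k\fD_k=0$, multiplying the $k$-th Fenchel--Young inequality by $\bweight_k\geq0$ and adding yields
\[
  \sum_k\bweight_k\FD_{\b_k}(\a)+\sum_k\bweight_k\FD_{\b_k}^*(\fD_k)\;\geq\;\sum_k\bweight_k\dotp{\fD_k}{\a}=0,
\]
so the value of~\eqref{eq-variational-barycenter-discrete} is at least minus that of~\eqref{eq-dual-pbm}. For the reverse inequality and attainment of the dual, I would rewrite~\eqref{eq-variational-barycenter-discrete}, introducing a free variable $\a\in\RR^n$, as the consensus problem $\inf\bigl\{\sum_k\bweight_k\FD_{\b_k}(\a_k):\a_1=\cdots=\a_N=\a\bigr\}$, and form the Lagrangian $\sum_k\bweight_k\FD_{\b_k}(\a_k)+\sum_k\dotp{\psi_k}{\a_k-\a}$ with multipliers $\psi_k\in\RR^n$: minimizing over $\a$ is finite (and equals $0$) exactly when $\sum_k\psi_k=0$, while minimizing over each $\a_k$ gives $-(\bweight_k\FD_{\b_k})^*(-\psi_k)=-\bweight_k\FD_{\b_k}^*(-\psi_k/\bweight_k)$. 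With the change of variables $\fD_k=-\psi_k/\bweight_k$ the constraint becomes $\sum_k\bweight_k\fD_k=0$ and the Lagrangian dual is precisely the negative of~\eqref{eq-dual-pbm}. A standard constraint qualification then gives strong duality with the dual attained: the uniform histogram $\tfrac1n\ones_n$ is feasible for the consensus problem and has all coordinates in the relative interior of the common domain $\Si_n$ (for $\epsilon=0$ one may instead invoke linear programming duality, since~\eqref{eq-variational-barycenter-discrete} is then a feasible bounded linear program). Combined with weak duality, the optimal values of~\eqref{eq-variational-barycenter-discrete} and~\eqref{eq-dual-pbm} therefore sum to zero; the primal value is attained because $\sum_k\bweight_k\FD_{\b_k}$ is continuous on the compact set $\Si_n$.

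It remains to extract~\eqref{eq-primal-dual-relationship}. Let $\a^\star$ solve~\eqref{eq-variational-barycenter-discrete} and let $(\fD_k^\star)_k$ be any solution of~\eqref{eq-dual-pbm}. By strong duality the inequality displayed above is an equality for this pair, i.e.\ $\sum_k\bweight_k\bigl(\FD_{\b_k}(\a^\star)+\FD_{\b_k}^*(\fD_k^\star)-\dotp{\fD_k^\star}{\a^\star}\bigr)=0$. Each summand is nonnegative ($\bweight_k\geq0$ times a Fenchel--Young slack) and they sum to zero, so each vanishes; since $\bweight_k>0$, this forces $\FD_{\b_k}(\a^\star)+\FD_{\b_k}^*(\fD_k^\star)=\dotp{\fD_k^\star}{\a^\star}$ for every $k$, which is the equality case of Fenchel--Young, i.e.\ $\a^\star\in\partial\FD_{\b_k}^*(\fD_k^\star)$. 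For $\epsilon>0$, Corollary~\ref{cor-legendre-single} asserts that $\FD_{\b_k}^*$ is differentiable, so $\partial\FD_{\b_k}^*(\fD_k^\star)=\{\nabla\FD_{\b_k}^*(\fD_k^\star)\}$ and~\eqref{eq-primal-dual-relationship} follows; for $\epsilon=0$ the identity holds with $\nabla\FD_{\b_k}^*(\fD_k^\star)$ interpreted as a subgradient.

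I expect the strong-duality step to be the main obstacle: one must match the Lagrangian dual of the consensus problem to the exact constrained shape of~\eqref{eq-dual-pbm} (keeping track of the $\bweight_k$-scaling) and, crucially, verify the qualification condition in the \emph{relative-interior} sense --- a plain-interior condition fails because the affine hull of the feasible set is a proper subspace. Two minor points worth flagging are that~\eqref{eq-primal-dual-relationship} as stated presupposes $\bweight_k>0$ for all $k$ (for an index with $\bweight_k=0$ the dual leaves $\fD_k^\star$ unconstrained and the identity can fail), and that the symbol $\nabla$ there denotes an honest gradient only for $\epsilon>0$, Corollary~\ref{cor-legendre-single} being inapplicable at $\epsilon=0$.
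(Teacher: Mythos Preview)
Your proposal is correct and follows essentially the same route as the paper: rewrite the barycenter problem as a consensus problem $\min\{\sum_k\bweight_k\FD_{\b_k}(\a_k):\a_1=\cdots=\a_N\}$, take its Fenchel--Rockafellar (equivalently, Lagrangian) dual, and perform the change of variables $\fD_k=\tilde\fD_k/\bweight_k$ to arrive at~\eqref{eq-dual-pbm}, reading off~\eqref{eq-primal-dual-relationship} from the primal--dual optimality conditions. Your version is in fact more complete than the paper's terse argument, since you spell out the weak-duality bound, the relative-interior qualification, and the Fenchel--Young equality case, and you correctly flag the implicit hypotheses $\bweight_k>0$ and $\epsilon>0$ needed for the gradient form of~\eqref{eq-primal-dual-relationship}.
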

\unskip

\begin{proof}
	We rewrite the barycenter problem 
	$$\umin{\a_1,\ldots,\a_N} \sum_k \bweight_k \FD_{\b_k}(\a_k) \qstq \a_1=\cdots=\a_N$$
	whose Fenchel--Rockafelar dual  reads
		$$\umin{\tilde \fD_1,\ldots,\tilde \fD_N} \sum_k \bweight_k \FD_{\b_k}^*(\tilde \fD_k/\bweight_k) 
		\qstq \sum_k \tilde \fD_k = 0.$$
	Since the primal problem is strictly convex, the primal-dual relationships show that the unique solution $\a^\star$ of the primal can be obtained from any solution $(\tilde\fD_k^\star)_k$ via the relation
	$\a_k^\star = \nabla \FD_{\b_k^\star}^*(\tilde \fD_k^\star/\bweight_k)$.
	One obtains the desired formulation using the change of variable $\fD_k = \tilde \fD_k/\bweight_k$.% \qquad
\end{proof}

Theorem~\ref{prop-dual-energy} provides a simple approach to solve the $\epsilon$-WBP: Rather than minimizing directly the sum of regularized Wasserstein distances in \eqref{eq-variational-barycenter-discrete}, this formulation only involves minimizing a strictly convex function with closed form objectives and gradients.

%%%%%
\paragraph{Parallel implementation} 

%\todo{Needs revisiting with new notation most likely.}

The objectives, gradients, and Hessians of the Fenchel--Legendre dual $\FD^*_\b$ can be computed using either matrix-vector products or elementwise operations. Given $N$ histograms $(\b_k)_k$ and $N$ dual variables $(\fD_k)_k$, the computation of $N$ objective values $(\FD_{\b_k}^*(\fD_k))_k$ and $N$ gradients $(\nabla \FD_{\b_k}^*(\fD_k))_k$ can all be vectorized. Assuming that all column vectors $\fD_k$ and $\b_k$ are gathered in $n\times N$ matrices $F$ and $B$ respectively, we first define the  $n\times N$ auxiliary matrices
\eq{
	A\defeq e^{F/\epsilon}, \quad C\defeq \frac{B}{\K A} , \quad  \Delta\defeq A\circ (\K C)
}
to form the vector of objectives
\eql{\label{eq:onehistoobj}
	H^*\; \defeq [ \FD_{\b_1}^*(\fD_1),\dots, \FD_{\b_N}^*(\fD_N)]= -\epsilon \ones_n^\top \left(B \circ \log(C)\right)
}
and the matrix of gradients 
\begin{equation}\label{eq:onehistograd}\nabla H^*\; \defeq [\nabla \FD_{\b_1}^*(\fD_1),\dots,\nabla \FD_{\b_N}^*(\fD_N)]=\Delta.\end{equation}

%%%%%%%%%%%%%%%%%%%%%%%%%%%%%%%%%%%%%%%%%%

% 
% \subsection{Parallel Computations, Two Histograms} 
% 
% We precompute the denominators appearing in Eq.~\eqref{eq:xgh} for all pairs $(\fD_k,\FD_k)_{k\leq N}$ using a $n\times n$ matrix times $n\times N$ matrix of variables, a Schur product and a simple matrix vector product: 
% $$U=\ones_n^\top \left(e^{G/\epsilon}\circ (Ke^{H/\epsilon})\right).$$
% $U$ is therefore a $1 \times N$ row vector.
% The $2n\times N$ matrix of gradients is then obtained as
% $$\nabla W^*\;\defeq [\nabla W^*(\fD_1,\FD_1),\dots,\nabla W^*(\fD_N,\FD_N)]= \begin{bmatrix}e^{G/\epsilon} \circ (\K e^{H/\epsilon}) \\ e^{H/\epsilon} \circ (\\K  e^{G/\epsilon})\end{bmatrix}\circ \left(\ones_{2n} 1/U \right),$$
% where the product by a diagonal matrix above should be implemented in practice using a fast subroutine such as \texttt{bsxfun} in MATLAB, broadcasting with Numpy in Python or any convenient way to scale rows in a matrix.

%%%%%%%%%%%%%%%%%%%%%%%%%%%%%%%%%%%%%%%%%%%%%%%%%%%%%%%%%%%%%
%SEC 3.2
\subsection{Algorithm}
\label{sec-algo}

The $\epsilon$-WBP in \eqref{eq-dual-pbm} has a smooth objective with respect to each of its variables $\fD_k$, a simple linear equality constraint, and both gradients and Hessians that can be computed in closed form. We can thus compute a minimizer for that problem using a naive gradient descent outlined in Algorithm~\ref{algo:firstorder}.
Note that the iterates $F$ are projected at each iteration on the constraint $F \bweight=0$ (which is equivalent to projecting the gradient direction on this constraint if the initial $F$ satisfies it).
To obtain a faster convergence, it is also possible to use accelerated gradient descent, quasi-Newton or truncated Newton methods~\cite[section~10]{Boyd:1072}. In the latter case, the resulting KKT linear system is sparse, and solving it with preconjugate gradient techniques can be efficiently carried out. We omit these details and only report results using off-the-shelf L-BFGS. From the dual iterates $\fD_k$ stored in a $n\times N$ matrix $F$, one recovers primal iterates using the formula~\eqref{eq-primal-dual-relationship}, namely, $\a_k = e^{\fD_k/\epsilon}\circ \K \frac{\b_k}{\K e^{\fD_k/\epsilon}}.$
At each intermediary iteration one can thus form a solution to the smoothed Wasserstein barycenter problem by averaging these primal solutions, $\tilde{\a} = \Delta\ones_N/N.$ Upon convergence, these $\a_k$ are all equal to the unique solution $\a^\star$. The average at each iteration $\tilde{\a}$ converges toward that unique solution, and we use the sum of all linewise standard deviations of $\Delta$: $\ones_d^\top\sqrt{(\tilde{\Delta}\circ \tilde{\Delta}) \ones_N/N}$, where $\tilde{\Delta}=\Delta(I_N-\frac{1}{N}\ones_N\ones_N^\top)$ to monitor that convergence in our algorithms.

%alg 1
\begin{algorithm}
	\begin{algorithmic}[1]
		\caption{\hspace*{-3pt}. Smoothed Wasserstein Barycenter, Generic Algorithm.\label{algo:firstorder}}
		\STATE \textbf{Input}: $B=[\b_1,\ldots,\b_N] \in(\Sigma_n)^N$, metric $M\in\RR_+^{n\times n}$, barycenter weights $\bweight\in\Sigma_N$, $\epsilon>0$, tolerance $\varepsilon>0$. 
		\STATE initialize $F\in\mathbb{R}^{n\times N}$ and form the $n\times n$ matrix $K=e^{-\C/\epsilon }$.
		\REPEAT
				\STATE From gradient matrix $\Delta$ (see~(\ref{eq:onehistograd})) produce update matrix $\hat{\Delta}$ using either $\Delta$ directly or other methods such as L-BFGS.
				\STATE $F= F - \tau \hat{\Delta}$, update with fixed step length $\tau$ or approximate line search to set $\tau$.
				\STATE $F = F - \frac{1}{\norm{\bweight}_2^2}(F \bweight)  \bweight^\top$ \quad (projection such that $G\bweight=0$)
		\UNTIL{$\ones_d^\top\sqrt{(\tilde{\Delta}\circ \tilde{\Delta}) \ones_N/N}<\varepsilon$, where $\tilde{\Delta}=\Delta(I_N-\frac{1}{N}\ones_N\ones_N^\top)$}
		\STATE output barycenter $\a=\Delta \ones_N/N$.
	\end{algorithmic}
\end{algorithm}

%\input sections/sec-experiments
%SEC 3.4
\subsection{Smoothing and stabilization of the WBP}
\label{sec:wassbarprob}
We make the claim in this section that smoothing the WBP is not only beneficial computationally but may also yield more stable computations. Of central importance in this discussion is the fact that the WBP can be cast as a LP of $Nn^2+n$ variables and $2Nn$ constraints and thus can be solved \emph{exactly} for small $n$ and $N$:
\begin{equation}\begin{aligned}\label{eq-WBP-simplex}
	\min_{\P_1,\ldots,\P_N,p}& \sum_{k=1}^N \bweight_k  \dotp{\P_k}{\C}\\
	\text{s.t. } & \P_k \in \RR^{n\times n}_+ \ \forall k\leq N; \a \in \Sigma_n,\\
	& \P_k^\top \ones_n = \b_k \ \forall k\leq N,\\
	 & \P_1 \ones_n = \dots = \P_N \ones_n = \a.
\end{aligned}\end{equation}
Given couplings $\P_1^\star,\ldots,\P_N^\star$ which are optimal solutions to \eqref{eq-WBP-simplex}, the solution to the WBP is equal to the marginal common to all those couplings: $\a^\star=\P_k^\star \ones_n$ for any $k\leq N$. For small $N$ and $n$, this problem is tractable, but it can be quite ill-posed as we see next.

Indeed, it is known that the $2$-Wasserstein mean of two univariate (continuous) Gaussian densities of mean and standard deviation $(\mu_1,\sigma_1)$ and $(\mu_2,\sigma_2)$, respectively, is a 
Gaussian of mean $(\mu_1+\mu_2)/2$ and standard deviation $(\sigma_1+\sigma_2)/2$~\cite[section~6.3]{Carlier_wasserstein_barycenter}. This fact is illustrated in the top left plot of Figure~\ref{fig:smoothvsnonsmooth}, where we display the Wasserstein average $\Ncal(0,5/8)$ of the two densities $\Ncal(2,1)$ and $\Ncal(-2,1/4)$. That plot is obtained by using smoothed spline interpolations of a uniformly spaced grid of $100$ values, as can be better observed in the top-right (stair) plot, where the discrete evaluations of these densities are respectively denoted $\a_W$, $\b_1$, and $\b_2$.

% transform : p_W -> a_W, q_1,q_2 -> b_1,b_2,  p^* -> a^*

%fig 1
\begin{figure}%[ht]
%	\centering\includegraphics[width=.75\columnwidth]{smoothvsnonsmooth.pdf}    
	\centerline{\includegraphics[width=.75\linewidth]{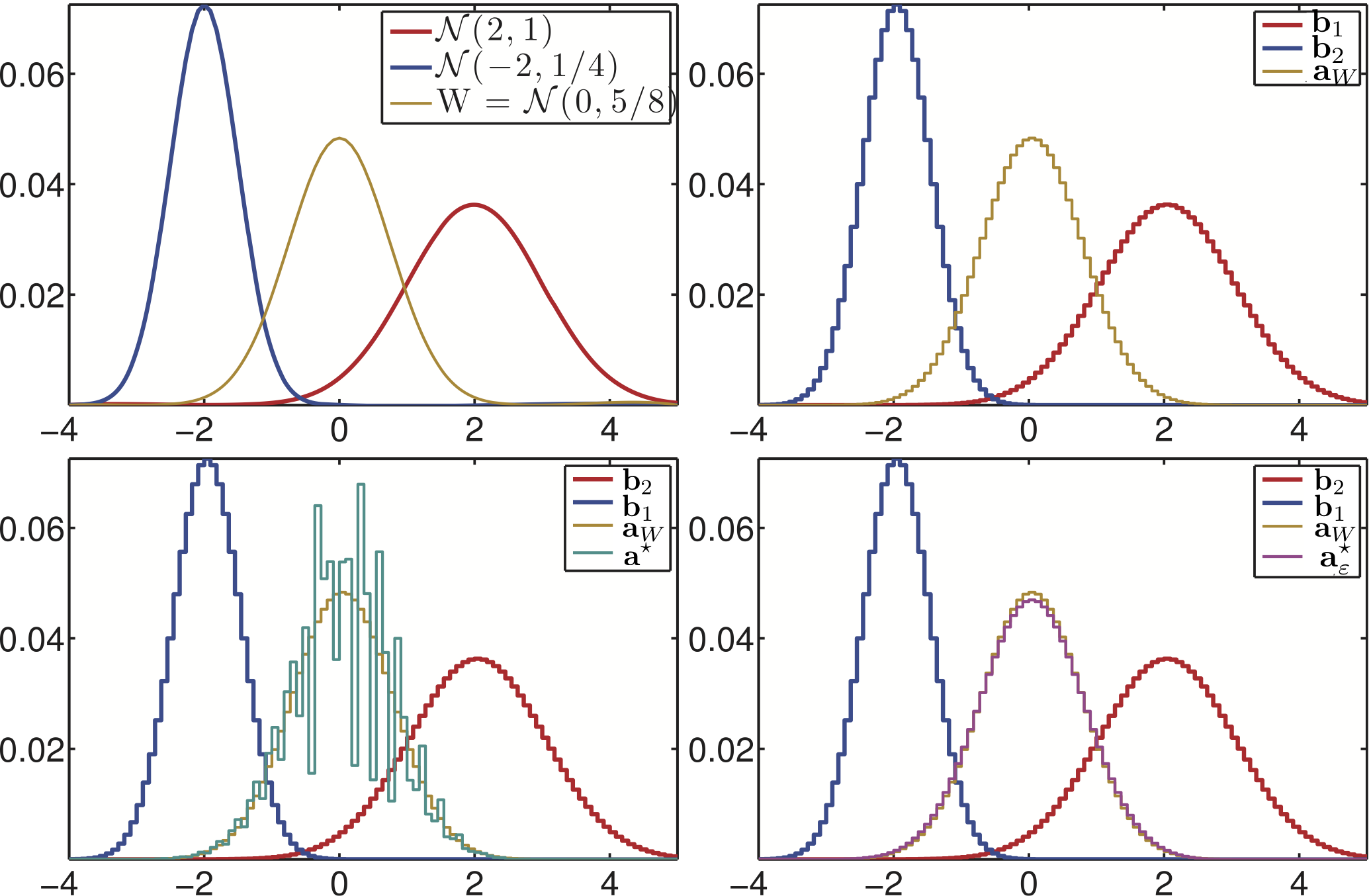}}
	\caption{Top left: two Gaussian densities and their barycenter. Top right: same densities, discretized. Bottom left: 
discretization of the true barycenter vs.\ the optimum of {\rm(\ref{eq-WBP-simplex})}. Bottom right: 
barycenter computed with our smoothing approach.}\label{fig:smoothvsnonsmooth}
\end{figure}

Naturally, one would expect the barycenter of $\b_1$ and $\b_2$ to be close, in some sense, to the discretized histogram $\a_W$ of their true barycenter. Histogram $\a^\star$, displayed in the bottom-left plot, is the exact optimal solution of \eqref{eq-WBP-simplex}, computed with the simplex method. That WBP reduces to a linear program of $2\times 100^2$ variables and $300$ constraints. We observe that $W_2^2(\a^\star,\b_1)+W_2^2(\a^\star,\b_2)=0.5833950$ whereas  $W_2^2(\a_W,\b_1)+W_2^2(\a_W,\b_2)=0.5834070$. The solution obtained with the network simplex has, indeed, a smaller objective than the discretized version of the true barycenter.

The bottom right plot displays the solution of the \emph{smoothed} WBP (with smoothing parameter $\epsilon=\frac{1}{100}$ and a ground cost $\C$ that has been rescaled to have a median value of $1$). The objective value for that smoothed approximation is $0.5834597$.

This numerical experiment does not contradict the fact that the discretized barycenter $\a^\star$ converges (in the weak* sense) to the continuous barycenter as the grid size tends to zero, as shown in~\cite{Carlier-NumericsBarycenters}. This observation illustrates however that, because it is defined as the $\argmin$ of a linear program, the true Wasserstein barycenter may be  unstable (when viewed as an histogram, and not in the sense of the weak* topology of measures), even for such a simple problem and for large $n$ as illustrated in Figure~\ref{fig:smoothvsnonsmooth2}. Regularizing the Wasserstein distances thus has the added benefit of smoothing the resulting solution of the WBP and that of mitigating low sample size effects.

The choice of the parameter $\epsilon$ is application-dependent, but it should scale with the typical distance between sampling locations (e.g., the grid step size). Note that choosing too small $\epsilon$ not only leads to slower convergence of our algorithm but also leads to numerical instabilities and can ultimately break the convergence. In simple settings such as low-dimensional grids, computational strategies can overcome some of these issues~\cite{Schmitzer2016}.

% ------> Version avec une seule figure, mises cÃ´te Ã  cÃ´te. je trouve que les fonts sont trop petites.
%
% \begin{figure}[ht]
% 	\includegraphics[width=.5\linewidth]{smoothvsnonsmooth.pdf}    
% 	\includegraphics[width=.5\linewidth]{nonsmoothsnot_conv.pdf}
% 	\caption{(top-left) two Gaussian densities and their barycenter (top middle) same densities, discretized (bottom left) discretization of the true barycenter \emph{vs.} the optimum of equation~\ref{eq-WBP-simplex} (bottom middle) barycenter computed with our smoothing approach. (right) Plots of the exact barycenters for varying grid size $n$}\label{fig:smoothvsnonsmooth}
% \end{figure}

%fig 2
\begin{figure}%[ht]
%\centering\includegraphics[width=.75\columnwidth]{nonsmoothsnot_conv.pdf}    
\centerline{\includegraphics[width=.75\linewidth]{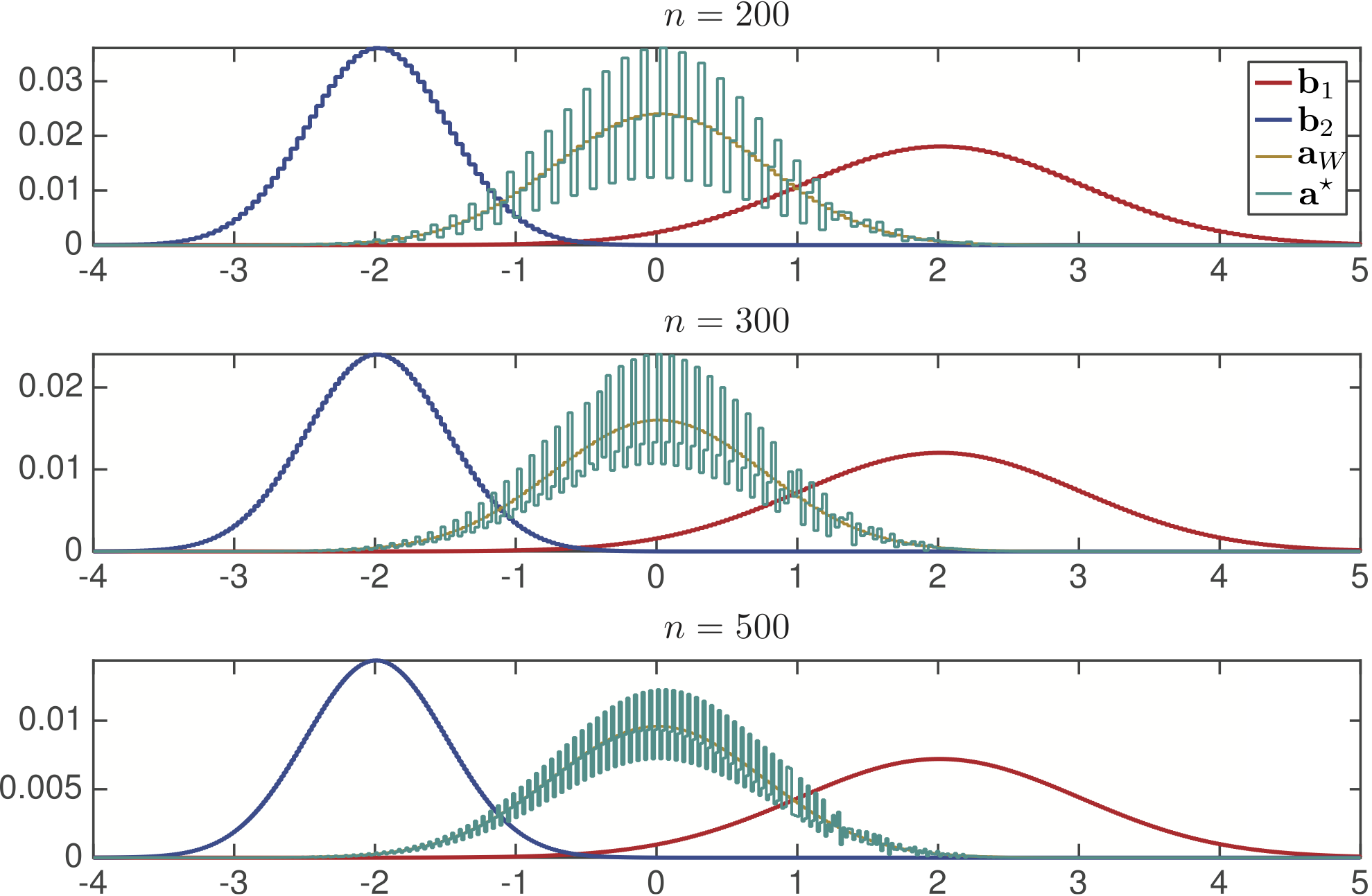}}
\caption{Plots of the exact barycenters for varying grid size $n$.}
\label{fig:smoothvsnonsmooth2}
\end{figure}

%%%%%%%%%%%%%%%%%%%%%%%%%%%%%%%%%%%%%%%%%%%%
%SEC 3.5
\subsection{Performance on the WBP}

We compare in this section the behavior of the smooth dual approach presented in this paper with that of (i) the smooth primal approach of~\cite{cuturi2014fast}, (ii) the dual approach of \cite{Carlier-NumericsBarycenters}, and (iii) the Bregman iterative projections approach of \cite{BenamouCCNP15}. We compare these methods on the simple task of computing the Wasserstein barycenter of $12$ histograms laid out on the $100\times100$ grid, as previously introduced in \cite[section~3.2]{BenamouCCNP15}. We outline briefly all four methods below and follow by presenting numerical results.

% \begin{figure}[h!]
% \centering\includegraphics[width=\columnwidth]{mymixture.png}
% 	\caption{12 measures, truncated mixtures of Gaussians, used in our benchmark. Convergence speed results displayed in Figure~\ref{fig:costs3} and barycenters obtained in~\ref{fig:bar}}.\label{fig:mix}
% \end{figure}

% \begin{figure*}[ht]
% 	\centering\includegraphics[width=\textwidth]{12barycenters.png}
% 	\caption{Barycenters obtained for the three different techniques using the data described in Figure~\ref{fig:mix} after at most $10^4$ iteration units, each iteration unit being equal to $n^2N$ operations, here $(100\times 100)^2\times 12$.}\label{fig:bar}
% \end{figure*}

\paragraph{Smooth primal first order descent} Cuturi and Doucet \cite[section~5]{cuturi2014fast} proposed to directly minimize \eqref{eq-variational-barycenter-discrete} with a regularizer $\epsilon>0$. That objective can be evaluated by running $N$ Sinkhorn fixed-point iterations in parallel. That objective is differentiable, and its gradient is equal to $\epsilon\sum_k \bweight_k \log \uD_k$, where the $\uD_k$ are the left scalings obtained with that subroutine. A weakness of that approach is that a tolerance $\epsilon$ for the Sinkhorn fixed-point algorithm must be chosen. Convergence for the Sinkhorn algorithm can be measured with a difference in $\ell_1$ norm (or any other norm) between the row and column marginals of $\diag(\uD_k)e^{-\C/\epsilon}\diag(\vD_k)$ and the targeted histograms $\a$ and $\b_k$. Setting that tolerance $\epsilon$ to a large value ensures a faster convergence of the subroutine but would result in noisy gradients which could slow the convergence of the algorithm. Because the smoothed dual approach only relies on closed form expressions, we do not have to take into account such a trade-off.

\paragraph{Iterative Bregman projections} Benamou et al.\ \cite[Proposition~1]{BenamouCCNP15} recall that the computation of the smoothed Wasserstein distance between $\a,\b$ using the Sinkhorn algorithm can be interpreted as an iterative alternated projection of the $n \times n$ kernel matrix $e^{-\C/\epsilon}$ onto two affine sets, $\enscond{\P}{\P\ones_n=\a}$ and $\enscond{\P}{\P^\top\ones_n=\b}$. That projection is understood to be in the Kullack--Leibler divergence sense. More interestingly, the authors also show that the smoothed WBP itself can also be tackled using an iterative alternated projection, cast this time in a space of dimension $ n\times n \times N$. Very much like the original Sinkhorn algorithm, these projections can be computed for a cheap price, by only tracking variables of size $n\times N$. This approach yields an extremely simple, parameter-free generalization of the Sinkhorn algorithm which can be applied to the WBP.

\paragraph{Smooth dual L-BFGS} 
The dual formulation with variables $(\fD_1,\ldots,\fD_N)\in(\RR^n)^N$ of \eqref{eq-dual-pbm} can be solved using a constrained L-BFGS solver
At each iteration of that minimization, we can recover a feasible solution $\a$ to the primal problem of \eqref{eq-variational-barycenter-discrete} via the primal-dual relation
$\a = \frac{1}{N}\sum_k \nabla \FD_{\b_k}^*(\tilde \fD_k)$.

\paragraph{Dual $(\epsilon=0)$ with L-BFGS} This approach amounts to solving directly the (nondifferen\-tiable) dual problem described in \eqref{eq-dual-pbm} with no regularization, namely, $\epsilon=0$. Subgradients for the Fenchel--Legendre transforms $\FD_{\b_k}^*$ can be obtained in closed form as detailed in~\cite{Carlier-NumericsBarycenters}.
% through Proposition~\ref{prop:harddual}.  \todo{Gab: I have removed this}
As with the smoothed-dual formulation, we can also obtain a feasible primal solution by averaging subgradients. We follow the recommendation of \cite{Carlier-NumericsBarycenters} to use L-BFGS. The nonsmoothness of that energy is challenging: we have observed empirically that a naive subgradient method applied to that problem fails to converge in all examples we have considered, whereas the L-BFGS approach converges, albeit without guarantees.

%, each a mixture of truncated Gaussians on the $100\times 100$ grid, considered to compute the barycenters displayed in Figure
\paragraph{Averaging truncated mixtures of Gaussians}
\looseness=1We consider the 12 truncated mixtures of Gaussians introduced in \cite[section~3.2]{BenamouCCNP15}. To compare computational time, we use $Nn^2$ elementary operations as the computation unit. These $Nn^2$ operations correspond to matrix-matrix products in the smoothed Wasserstein case and $Nn$ computations of nearest neighbor assignments among $n$ possible neighbors. Note that in both cases (Gaussian matrix product and nearest neighbors under the $\ell_2$ metric) computations can be accelerated by considering fast Gaussian convolutions and $kd$-trees for fast nearest neighbor search. We do not consider them in this section. We plot 
in Figure \ref{fig3} the optimality gap 
with respect to the optimum of these four techniques as a function of the number of computations, by taking as a reference the lowest value attained across all methods. This value is attained, as in \cite{BenamouCCNP15}, by the iterative Bregman projections approach after 771 iterations (not displayed in our graph). We show these gaps for both the smoothed $(\epsilon=1/100)$ and nonsmoothed objectives $(\epsilon=0)$. We observe that the iterative Bregman approach outperforms all other techniques.
The smoothed-dual approach follows closely, and its performance is significantly improved when using the heuristic initialization technic detailed in~\cite{2016-Cuturi-siims}.
% notably when initialized with the formula provided in Definition~\ref{def:SI}. \todo{Gab: removed}

%fig 3
\begin{figure}%[h]
%	\centering\includegraphics[width=\columnwidth]{experiments.pdf}
%	\centerline{\includegraphics[width=.8\columnwidth]{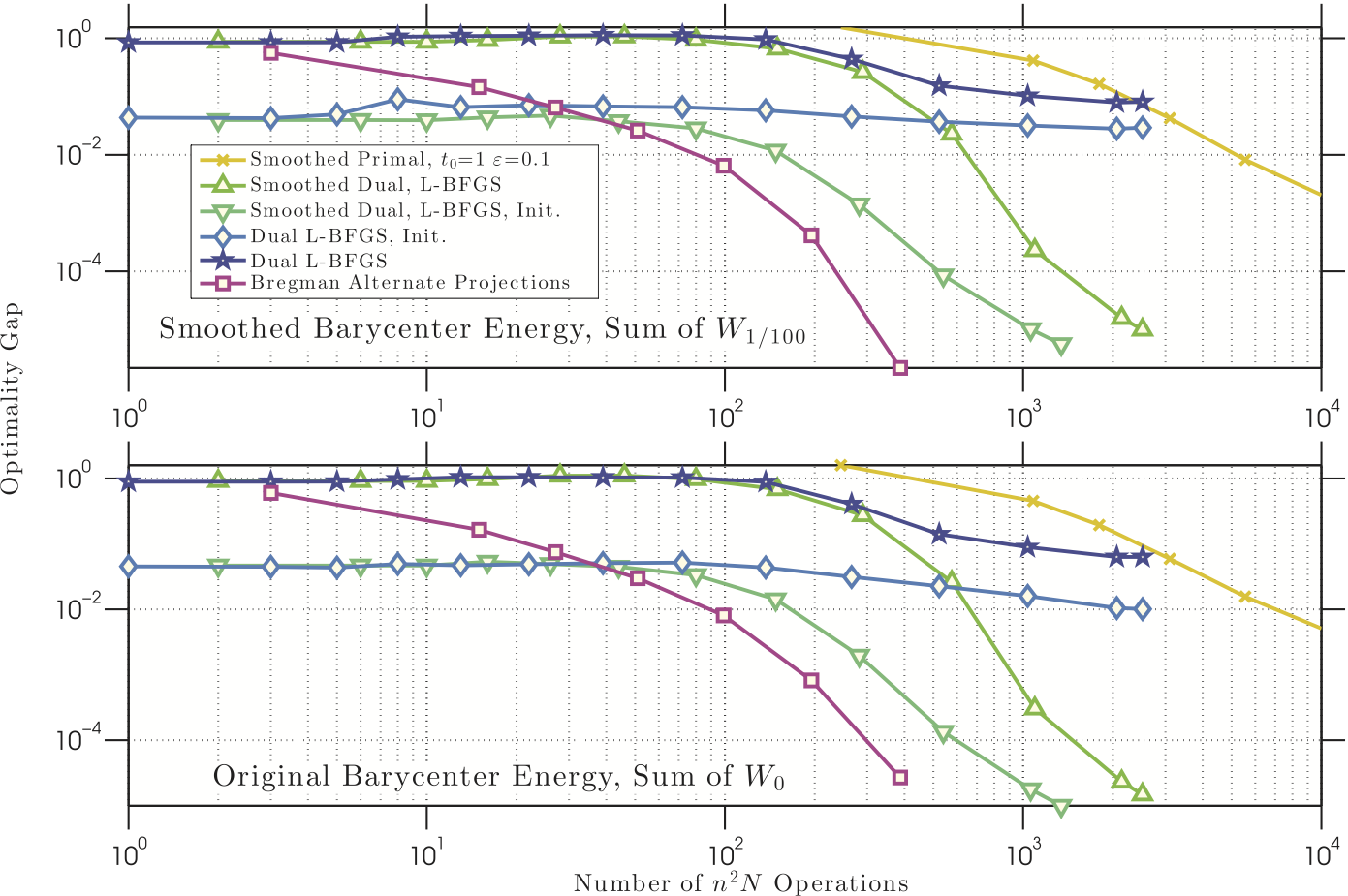}}
	\includegraphics[width=.8\columnwidth]{M103260fig-3}
	\caption{Number of quadratic operations (matrix vector product or min search in a matrix) vs.\ optimization gap to the smallest possible objective found after running $10^5$ iterations of all algorithms, 
\emph{log-log} scale. Because the smooth primal/dual approaches optimize a different criterion than the dual approach, we plot both objectives.  The Smooth dual L-BFGS converges faster in both smooth 
and non-smooth metrics. Note the impact of the initialization detailed in~\cite{2016-Cuturi-siims}.}
\label{fig3}
\end{figure}

% We also plot the solutions of all 3 algorithms in Figure~\ref{fig:bar} after up to $10^4$ iterations. Because the smoothed-primal and our smoothed-dual approach aim at minimizing the same objective, it is not surprising that their solutions are similar. Note however that with a budget of at most $10^4$ iterations the solution obtained with dual smoothing is more detailed than the one obtained with a primal descent. The right-most figure, obtained following \cite{Carlier-NumericsBarycenters}'s approach, shows that the original Wasserstein barycenter problem formulation, without smoothing, can yield very irregular solutions, as was also observed by the authors themselves in their paper.

% !TEX root = ../SIGEST-SemiDual.tex

%SEC 4
%\input sections/sec-regularized-barycenters
\section{Regularized problems}
\label{sec:exten}

We show in this section that our dual optimization framework is versatile enough to deal with functionals involving Wasserstein distances that are more general than the initial WBP problem.

%%%%%%%%%%%%%%%%%%%%%%%%%%%%%%%%%%%%%%%%%%%%%%%%%%%%
%SEC 4.1
\subsection{Regularized Wasserstein barycenters}\label{sec:regularized}

In order to enforce additional properties of the barycenters, it is possible to penalize~\eqref{eq-variational-barycenter-discrete} with an additional convex regularization, and consider%\vspace{-4pt}
\eql{\label{eq-regul-primal}
	\umin{\a} \sum_{k=1}^N \bweight_k \FD_{\b_k}(\a) + J(\Aa \a),
}
where $J$ is a convex real-valued function, and $\Aa$ is a linear operator.

The following proposition shows how to compute such a regularized barycenter through a dual optimization problem. 

%prop 1
\begin{proposition}
The dual problem to~\eqref{eq-regul-primal} reads
\eql{\label{eq-regul-dual}
	\umin{(\fD_k)_{k=1}^N, \gD} \sum_{k=1}^N \bweight_k \FD_{\b_k}^*(\fD_k) + J^*(\gD) + \iota_{H}((\fD_k)_k,\gD),
}
\eq{
	\qwhereq
	H \eqdef \enscond{ ((\fD_k)_{k=1}^N, \gD) }{ \Aa^* \gD + \sum_k \bweight_k \fD_k = 0 },
}
\noindent and the primal-dual relationships read
\eql{\label{eq-primal-dual}
	\foralls k=1,\ldots,N, \quad \a = \nabla \FD_{\b_k}^*(\fD_k). 
}
\end{proposition}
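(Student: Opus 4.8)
The plan is to reproduce the splitting-plus-Fenchel--Rockafellar argument already used for Theorem~\ref{prop-dual-energy}, now carrying along the extra term $J(\Aa\a)$. First I would rewrite~\eqref{eq-regul-primal} as the equality-constrained problem
\[
	\umin{\a_1,\ldots,\a_N,\,z}\;\sum_{k=1}^N \bweight_k \FD_{\b_k}(\a_k) + J(z)
	\qstq \a_1=\cdots=\a_N \quad\text{and}\quad z=\Aa\a_1,
\]
i.e.\ as $\umin{\a}\Phi(\mathcal B\a)$ in the template $\min_\a f(\a)+g(\mathcal B\a)$ with $f\equiv 0$, $g=\Phi$, where $\mathcal B\colon\a\mapsto(\a,\ldots,\a,\Aa\a)$ is linear and $\Phi\big((\a_k)_k,z\big)\eqdef\sum_k\bweight_k\FD_{\b_k}(\a_k)+J(z)$ is proper, convex and lower semicontinuous (each $\FD_{\b_k}$ is a regularized OT cost, hence convex with effective domain $\Sigma_n$, and $J$ is real-valued). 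Since $J$ is finite everywhere and $\Sigma_n$ has nonempty relative interior, taking $\a_0$ in that relative interior puts $\mathcal B\a_0$ in the relative interior of $\mathrm{dom}\,\Phi$, so the Fenchel--Rockafellar qualification holds and strong duality with dual attainment applies.

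Next I would assemble the two ingredients of the dual. The adjoint is $\mathcal B^*\big((\uD_k)_k,\vD\big)=\sum_k\uD_k+\Aa^*\vD$, and separability of $\Phi$ gives $\Phi^*\big((\uD_k)_k,\vD\big)=\sum_k(\bweight_k\FD_{\b_k})^*(\uD_k)+J^*(\vD)=\sum_k\bweight_k\FD_{\b_k}^*(\uD_k/\bweight_k)+J^*(\vD)$. The Fenchel--Rockafellar dual $\sup\{-\Phi^*(\mathbf y):\mathcal B^*\mathbf y=0\}$, after the change of variables $\fD_k=\uD_k/\bweight_k$ and $\gD=\vD$, is then exactly the minimization of $\sum_k\bweight_k\FD_{\b_k}^*(\fD_k)+J^*(\gD)$ subject to $\sum_k\bweight_k\fD_k+\Aa^*\gD=0$, which is~\eqref{eq-regul-dual} with the affine constraint encoded by $\iota_H$.

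Finally, for the primal--dual relations~\eqref{eq-primal-dual} I would invoke the extremality conditions that accompany Fenchel--Rockafellar strong duality: at a primal--dual optimal pair $\big(\a^\star,\mathbf y^\star\big)$ with $\mathbf y^\star=\big((\uD_k^\star)_k,\vD^\star\big)$ one has $\mathbf y^\star\in\partial\Phi(\mathcal B\a^\star)$, which by separability of $\Phi$ forces $\uD_k^\star\in\bweight_k\,\partial\FD_{\b_k}(\a^\star)$ for each $k$ (and $\vD^\star\in\partial J(\Aa\a^\star)$); rewriting in terms of $\fD_k^\star=\uD_k^\star/\bweight_k$ this reads $\fD_k^\star\in\partial\FD_{\b_k}(\a^\star)$, equivalently $\a^\star\in\partial\FD_{\b_k}^*(\fD_k^\star)$ by Fenchel reciprocity. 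Since $\FD_{\b_k}^*$ is $C^\infty$ for $\epsilon>0$ by Corollary~\ref{cor-legendre-single}, that subdifferential is the singleton $\{\nabla\FD_{\b_k}^*(\fD_k^\star)\}$, yielding $\a^\star=\nabla\FD_{\b_k}^*(\fD_k^\star)$; strict convexity of $\FD_{\b_k}$ (still for $\epsilon>0$) makes the primal solution $\a^\star$ unique, so the relation is well posed and in particular $\nabla\FD_{\b_k}^*(\fD_k^\star)$ is independent of $k$. I do not expect a genuine obstacle here: the only point needing a little attention is checking the constraint qualification for strong duality, which is immediate because $J$ is real-valued and the simplices have nonempty relative interior; the rest are the routine conjugate and adjoint computations above, together with the usual convention that indices $k$ with $\bweight_k=0$ (if present) simply drop out.
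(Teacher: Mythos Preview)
Your argument is correct and follows essentially the same Fenchel--Rockafellar route as the paper: the paper introduces auxiliary copies $\a_1,\ldots,\a_N$ and encodes $\a_1=\cdots=\a_N$ via an indicator on the diagonal inside $F$, whereas you keep a single $\a$ and replicate it through the linear map $\mathcal B\a=(\a,\ldots,\a,\Aa\a)$ with $f\equiv 0$; both packagings yield the same dual after the change of variables $\fD_k=\uD_k/\bweight_k$. Your derivation of the primal--dual relation via $\mathbf y^\star\in\partial\Phi(\mathcal B\a^\star)$ and Fenchel reciprocity is also the same mechanism the paper uses (there written as $\pi\in\partial G^*(-B^*\nu)$), and your explicit check of the qualification condition is a welcome addition that the paper omits.
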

% \unskip

\begin{proof}
	We rewrite the initial program~\eqref{eq-regul-primal} as 
	\eql{\label{eq-primal-reformulated}
		\umin{\pi} F( B\pi ) + G(\pi),
	} 
	where we denoted, for $\pi=(\a,\a_1,\ldots,\a_N)$, 
	\begin{align*}
			B \pi &\eqdef (\Aa \a, \a, \a_1,\ldots,\a_N), \\
			F(\beta,\b, \a_1,\ldots,\a_N) &\eqdef J(\beta) + \iota_{C}(\b,\a_1,\ldots,\a_N),\\
			G(\a, \a_1,\ldots,\a_N) &\eqdef \sum_k \bweight_k \FD_{\b_k}(\a_k)
	\end{align*} 
	for $C \eqdef \enscond{(\b,\a_1,\ldots,\a_N)}{\forall k, \a_k=\b}$.
	The Fenchel--Rockafelear dual to~\eqref{eq-primal-reformulated} reads
	\eq{
		\umax{\nu = \{\gD,\fD,(\fD_k)_k\}} - F^*(\nu) - G^*(-B^* \nu), 
	}
	where 
	\begin{align*}
		G^*(\fD,\fD_1,\ldots,\fD_N) &= \sum_k \bweight_k \FD_{\b_k}^*(\fD_k/\bweight_k) + \iota_{\{0\}}( \fD ),  \\
		B^*(\nu) &= (\Aa^* \gD + \fD, \fD_1,\ldots,\fD_N), \\
		F^*(\nu) &= J^*(\gD) + \iota_{C^\bot}(\fD,\fD_1,\ldots,\fD_N), 
	\end{align*} 
	where $C^\bot = \enscond{ (\fD,\fD_1,\ldots,\fD_N) }{\fD + \sum_k \fD_k=0}$. 
	One thus obtains the dual
	\eq{
		\umin{ \gD,b,(\fD_k)_k } \sum_k \bweight_k \FD_{\b_k}^*(-\fD_k/\bweight_k) + J^*(\gD)
		\qstq
		\choice{
				\Aa^* \gD + \fD = 0, \\
				\fD + \sum_k \fD_k=0.
		}
	}	
	The primal-dual relationships reads $\pi \in \partial G^*(-B^* \nu)$, and hence~\eqref{eq-primal-dual}. 
	Changing $-\fD_k/\bweight_k$ into $\fD_k$ gives the desired formula.\qquad
\end{proof}

Relevant examples of penalizations $J$ include the following:
\begin{itemize}
	\item In order to enforce some spread of the barycenter, one can use $\Aa=\Id$ and $J(\a) = \frac{\lambda}{2}\norm{\a}^2$, in which case $J^*(\fD) = \frac{1}{2 \lambda}\norm{\fD}^2$. In contrast to~\eqref{eq-dual-pbm}, the dual problem~\eqref{eq-regul-dual} is equivalent to an unconstraint smooth optimization. This problem can be solved using a simple Newton descent.  
	\item One can also enforce that the barycenter entries are smaller than some maximum value $\rho$ by setting $\Aa=\Id$ and $J = \iota_{\Cc}$, where $\Cc = \enscond{\a}{\normi{\a} \leq \rho}$. In this case, one has $J^*(\fD) = \rho \norm{\fD}_1$. The optimization~\eqref{eq-regul-dual} is equivalent an unconstrained nonsmooth optimization. Since the penalization is an $\ell^1$ norm, one solves it using first order proximal methods as detailed in section~\ref{sec-first-order-split} bellow.
	\item To force the barycenter to assume some fixed values $\a_I^0 \in \RR^{|I|}$ on a given set $I$ of indices, one can use $\Aa=\Id$ and $J=\iota_{\Cc}$, where $\Cc = \enscond{\a}{\a_I = \a_I^0}$, where $\a_I=(\a_i)_{i \in I}$. One then has $J^*(\fD) = \dotp{\fD_I}{\a_I^0} + \iota_{\{0\}}(\fD_{I^c})$.
	\item To force the barycenter to have some smoothness, one can select $\Aa$ to be a spacial derivative operator (for instance, a gradient approximated on some grid or mesh) and $J$ to be a norm such as an $\ell^2$ norm (to ensure uniform smoothness) or an $\ell^1$ norm (to ensure piecewise regularity). We explore this idea in Section~\ref{sec-tv-regul}.  
\end{itemize}

%%%%%%%%%%%%%%%%%%%%%%%%%%%%%%%%%%%%%%%%%%%%%%%%%%%%
%SEC 4.2
\subsection{Resolution using first order proximal splitting}
\label{sec-first-order-split}

Assuming without loss of generality that $\bweight_N \neq 0$ (otherwise one simply needs to permute the ordering of the input densities), one can note that it is possible to remove $\fD_N$ from~\eqref{eq-regul-dual} by imposing
\eq{
	\fD_N \eqdef - \frac{\Aa^* \gD}{\bweight_N} - \sum_{i=1}^{N-1} \frac{\bweight_k}{\bweight_N}  \fD_k, 
} 
and denoting $x \eqdef ((\fD_k)_{k=1}^{N-1}, \gD)$, one can consider the following optimization problem without the $H$ constraint
\eql{\label{eq-primal-unconstr}
	\umin{x} F(x) + G(x)
	\qwhereq
	\choice{
	F(x) \eqdef \sum_{k=1}^{N-1} \bweight_k \FD_{\b_k}^*(\fD_k) 
		+ \bweight_N \FD_{\b_N}^*\pa{  \fD_N(x)}, \\
	G(x) \eqdef J^*(\gD).
	}
}

We assume that one is able to compute the proximal operator of $J^*$
\eql{\label{eq-proximal-map}
	\Prox_{\tau J^*}(\gD) \eqdef \uargmin{\gD'} \frac{1}{2}\norm{\gD-\gD'}^2 + \tau J^*(\gD').
}
It is, for instance, an orthogonal projector on a convex set $C$ when $J^*=\iota_{C}$ is the indicator of $C$. 
One can easily compute  this projection, for instance, when $J$ is the $\ell^2$ or the $\ell^1$ norm (see Section~\ref{sec-tv-regul}). 
We refer to~\cite{BauschkeCombettes11} for more background on proximal operators.

The proximal operator of $G$ is then simply
\eq{
	\foralls x=((\fD_k)_{k=1}^{N-1}, \gD), \quad \Prox_{\tau G}(x) = ( (\fD_k)_{k=1}^{N-1}, \Prox_{\tau J^*}(\gD) ). 
}
Note also that the function $F$ is smooth with a Lipschitz gradient and that
\eq{
	\nabla F( (\fD_k)_{k=1}^{N-1}, \gD ) = 
	\pa{  
		(
			\bweight_k ( \nabla \FD_{\b_k}^*(\fD_k) 
			-
			\nabla \FD_{\b_N}^*(\fD_N)  )
		)_{k=1}^{N-1}, 		
		-\Aa \nabla \FD_{\b_N}^*(\fD_N)}.
}

The simplest algorithm to solve~\eqref{eq-primal-unconstr} is the forward-backward algorithm, whose iteration reads
\eql{\label{eq-algo-fb}
	\IIT{x} = \Prox_{\tau J^*}\pa{ \IT{x} - \tau \nabla F(\IT{x}) }.
}
If $0 < \tau<2/L$ where $L$ is the Lipschitz constant of $\nabla F$, then $\IT{x}$ converge to a solution of~\eqref{eq-primal-unconstr}; see~\cite{BauschkeCombettes11} and the references therein. In order to accelerate the convergence of the method, one can use accelerated schemes such as FISTA's algorithm~\cite{beck-fista}.

%%%%%%%%%%%%%%%%%%%%%%%%%%%%%%%%%%%%%%%%%%%%%%%%%%%
%SEC 4.3
\subsection{Total variation regularization}
\label{sec-tv-regul}

A typical example of regularization to enforce some geometrical regularity in the barycenter is the total variation regularization on a grid in $\RR^d$ (e.g., $d=2$ for images). It is obtained by considering
\eql{\label{eq-exmp-tv}
	\Aa \a \eqdef \nabla \a = ( \nabla_i \a )_i
	\qandq
	J(\fD) \eqdef \lambda \sum_i \norm{\fD_i}_{\beta}, 
}
where $\nabla_i \a \in \RR^d$ is a finite difference approximation of the gradient at a point indexed by $i$, and $\lambda \geq 0$ is the regularization strength.
When using the $\ell^2$ norm to measure the gradient amplitude, i.e., $\beta=2$, one obtains the so-called isotropic total variation, which tends to round corners and essentially penalizes the length of the level sets of the barycenter, possibly merging clusters together. 
When using instead the $\ell^1$ norm, i.e., $\beta=1$, one obtains the so-called anisotropic total variation, which penalizes independently horizontal and vertical derivative, thus favoring the emergence of axis-aligned edges and giving a ``crystalline'' look to the barycenters. We refer, for instance, to~\cite{2015-Caselles-tv} for a study of the effect of TV regularization on the shapes of levelsets using isotropic and crystalline total variations. 

In this case, it is possible to compute in closed form the proximal operator~\eqref{eq-proximal-map}. Indeed, one has $J^* = \iota_{\norm{\cdot}_{\beta^*} \leq \lambda}$, where $\beta^*$ is the conjugate exponent $1/\beta+1/\beta^*=1$. One can compute explicitly the proximal operator in the case $\beta \in \{1,2\}$ since they correspond to orthogonal projectors on $\ell^{\beta^*}$ balls
\eq{
	\Prox_{\tau J^*}(\gD)_i = 
	\choice{
		\min(\max(\gD_i,-\lambda), \lambda) \qifq \beta=1,  \\
		\gD_i \frac{\lambda}{ \max(\norm{\gD_i},1) }  \qifq \beta=2.
	}
}

%%%%%%%%%%%%%%%%%%%%%%%%%%%%%%%%%%%%%%%%%%%%%%%%%%%
%SEC 4.4
\subsection{Barycenters of images}
\label{sec-tv-images}

We start by computing barycenters of a small number of two-dimensional (2-D) images that are discretized on an uniform rectangular grid of $n=256 \times 256$ pixels $(z_i)_{i=1}^N$ of $[0,1]^2$. 
The entropic regularization parameter is set to $\epsilon = 1/n$.
We use either the isotropic ($\beta=2$) or anisotropic ($\beta=1$) total variation presented above, where $\nabla$ is defined using standard forward finite differences along each axis and using Neumann boundary conditions. The metric is the usual squared Euclidean metric
\eql{\label{eq-squared-eucl-metric}
	\foralls (i,j) \in \{1,\ldots,n\}^2, \quad
	\C_{i,j} = \norm{z_i-z_j}^2. 
}
The Gibbs kernel $\K=e^{-\C/\epsilon}$ is a filtering with a Gaussian kernel that can be applied efficiently to histograms in nearly linear time; see~\cite{2015-solomon-siggraph} for more details about convolutional kernels. 

%fig 4
\begin{figure}%[h!]
\iffalse
	\centering	
	\myTable{aniso}{0}{$\lambda=0$}	 
	\myTable{iso}{1000}{Isotropic, $\lambda=100$}	 
	\myTable{aniso}{500}{Anisotropic $\lambda=500$}	 
	\myTable{aniso}{2000}{Anisotropic $\lambda=2000$}	 
\fi
\centerline{\includegraphics{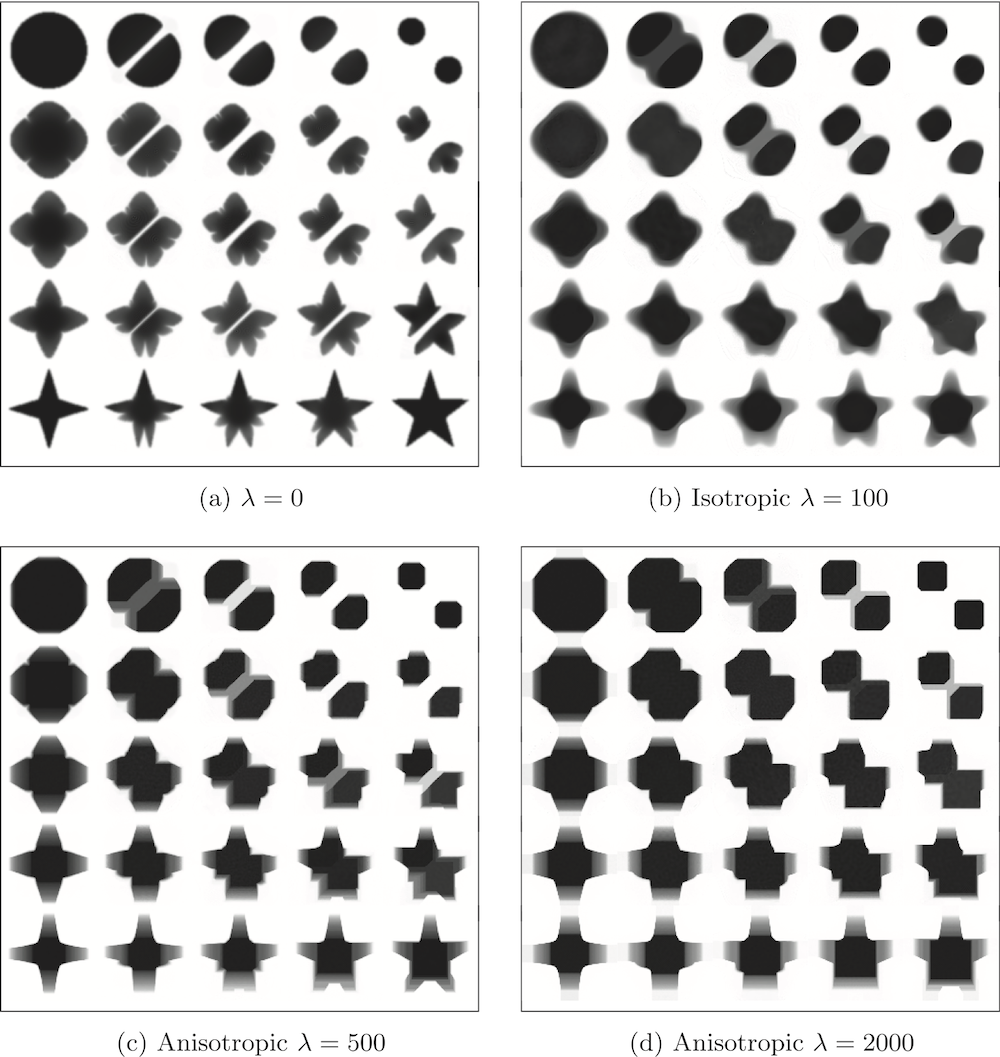}}
	\caption{Examples of isotropic and anisotropic TV regularization for the computation of barycenters between four input densities. 
		The weights $(\bweight_k)_{k=1}^N$ are bilinear interpolation weights, so that it is, for instance, $\bweight = (1,0,0,0)$ on the 
		top left corner and $(0,0,0,1)$ on the bottom right corner.}  \label{fig-images-bary}
\end{figure}

Figure~\ref{fig-images-bary} shows examples of barycenters of $N=4$ input histograms computed by solving~\eqref{eq-regul-primal} using the projected gradient descent method~\eqref{eq-algo-fb}. The input histograms represent 2-D shapes and are uniform (constant) distributions inside the support of the shapes. Note that in general the barycenters are not shapes, i.e., they are not uniform distributions, but this method can nevertheless be used to define meaningful averaging of shapes as exposed in~\cite{2015-solomon-siggraph}. 
Figure~\ref{fig-images-bary} compares the effects of $\beta \in \{1,2\}$, and one can clearly see how the isotropic total variation ($\beta=2$) rounds the corners of the input densities, while the anisotropic version ($\beta=1$) favors horizontal/vertical edges.  

%fig 5
\begin{figure}%[h!]
\iffalse
	\centering	
	\begin{tabular}{@{}c@{}c@{}c@{}c@{}c@{}c@{}c@{}c@{}}
		\myPicL{iso}{0} &
		\myPicL{iso}{200} &
		\myPicL{iso}{400} &
		\myPicL{iso}{600} &
		\myPicL{iso}{800} &
		\myPicL{iso}{1000} &
		\myPicL{iso}{2000} &
		\myPicL{iso}{3000} \\
		\myPicL{aniso}{0} &
		\myPicL{aniso}{200} &
		\myPicL{aniso}{400} &
		\myPicL{aniso}{600} &
		\myPicL{aniso}{800} &
		\myPicL{aniso}{1000} &
		\myPicL{aniso}{2000} &
		\myPicL{aniso}{3000} \\
		$\lambda=0$ &
		$\lambda=20$ &
		$\lambda=40$ &
		$\lambda=60$ &
		$\lambda=80$ &
		$\lambda=100$ &
		$\lambda=200$ &
		$\lambda=300$ 
	\end{tabular}	 
\fi
\centerline{\includegraphics{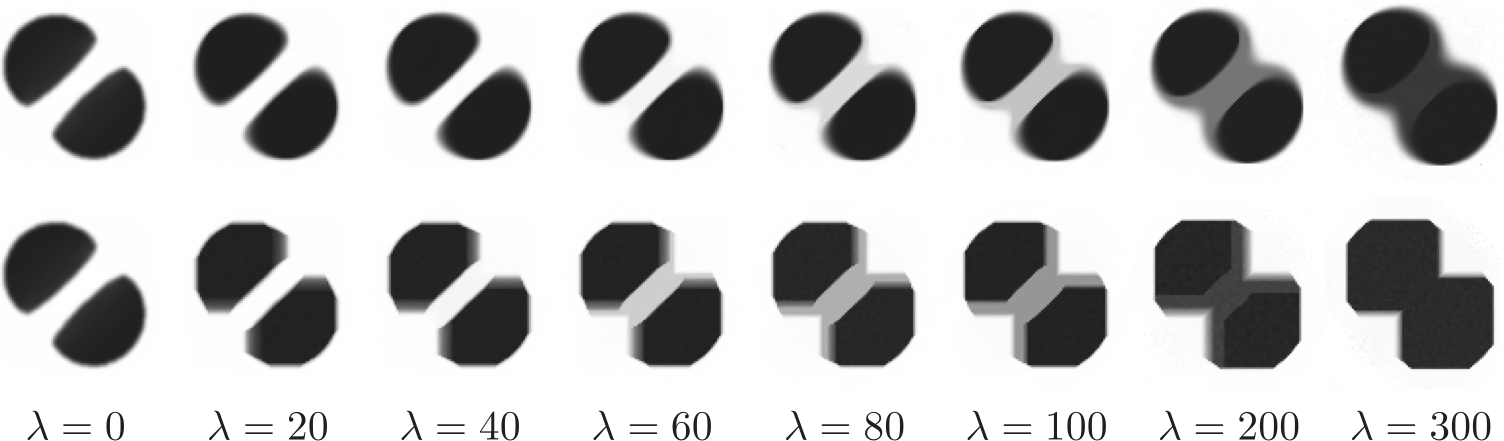}}
	\caption{Influence of $\lambda$ parameter for the iso-barycenter (i.e., $\bweight = (1/2,1/2)$) between two input densities. (They are the upper left and upper right corner of the $\lambda=0$ case in Figure~{\rm\ref{fig-images-bary}}.) 
		\textit{Top row:} isotropic total variation $(\beta=2)$.
		\textit{Bottom row:} anisotropic total variation $(\beta=1)$.} \label{fig-images-influ-lambda}
\end{figure}

Figure~\ref{fig-images-influ-lambda} shows the influence of the regularization strength $\lambda$ to compute the iso-barycenter of $N=2$ shapes. This highlights the fact that this total variation regularization has the tendency to group together small clusters, which might be beneficial for some applications, as illustrated in Section~\ref{sec-tv-meg} 
on magnetoencephalography (MEG) data denoising.

%%%%%%%%%%%%%%%%%%%%%%%%%%%%%%%%%%%%%%%%%%%%%%%%%%%
%SEC 4.5
\subsection{Barycenters of MEG data}
\label{sec-tv-meg}

We applied our method to an  MEG dataset. In this setup, the brain activity of a subject is recorded  (Elekta Neuromag, 306 sensors of which 204 planar gradiometers and 102 magnetometers, sampling frequency 1000Hz) while the subject reacted to the presentation of a target stimulus by pressing either the left or the right button.

Data is preprocessed applying signal space separation correction, interpolation of noisy sensors, and realignment of data into a subject-specific head position (MaxFilter, Elekta Neuromag). The signal was then filtered (low pass 40HZ), and artifacts such as blinks and heartbeats were 
removed thanks to signal-space projection using the Brainstorm software.\footnote{See \href{http://neuroimage.usc.edu/brainstorm}{http://neuroimage.usc.edu/brainstorm}.} The samples we used for our barycenter computations are an average of the norm of the two gradiometers for each channel from stimulation onto 50ms, and the classes were the left or right button.

This results in two classes of recordings, one for each pressed button. We aim at computing a representative activity map for each class using Wasserstein barycenters. For each class we have $N=33$ recordings $(\b_k)_{k=1}^N$ each having $n=66$ samples located on the vertices of an hexahedral mesh of a hemisphere (corresponding to a MEG recording helmet). These recorded values are positive by construction, and we rescale them linearly to impose $\b_k \in \Si_n$.  Figure~\ref{fig-meg}, top row, shows some samples from this dataset, displayed using interpolated colors as well as iso-level curves. The black dots represent the position $(z_i)_{i=1}^n$ of the electrodes on the half-sphere of the helmet, flattened on a 2-D disk. 

%fig 6
\begin{figure}%[h!]
\iffalse
	\centering	
	\begin{tabular}{@{}c@{}c@{}c@{}c@{}|@{}c@{}c@{}c@{}c@{}}
		& Class 1 & & & %
		& Class 2 & & \\ % 
		\myPicMEGin{1}{1} &
		\myPicMEGin{1}{2} &
		\myPicMEGin{1}{3} &
		\myPicMEGmean{1} & 
		%%%%
		\myPicMEGin{2}{1} &
		\myPicMEGin{2}{2} &
		\myPicMEGin{2}{3} &
		\myPicMEGmean{2} \\
		Sample 1 & 
		Sample 2 &
		Sample 3 &
		Mean &
		Sample 1 & 
		Sample 2 &
		Sample 3 &
		Mean  \\
		\myPicMEGbar{1}{0} &
		\myPicMEGbar{1}{2} &
		\myPicMEGbar{1}{4} &
		\myPicMEGbar{1}{8} &
		%%%
		\myPicMEGbar{2}{0} &
		\myPicMEGbar{2}{2} &
		\myPicMEGbar{2}{4} &
		\myPicMEGbar{2}{8} \\
		$\lambda=0$ &
		$\lambda=2$ &
		$\lambda=4$ &
		$\lambda=8$ &		
		%%%
		$\lambda=0$ &
		$\lambda=2$ &
		$\lambda=4$ &
		$\lambda=8$ 
	\end{tabular}	 
\fi
\centerline{\includegraphics[width=\linewidth]{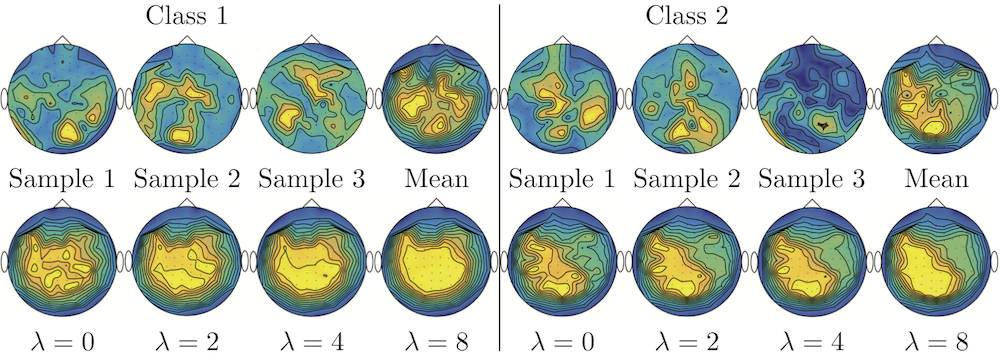}}
	\caption{Barycenter computation on MEG data. The left/right panels show, respectively, the first and the second class, 
		corresponding to recordings where the subject is asked to push the left or the right button.
		\textit{Top row:} examples of input histograms $\b_k$ for each class, as well as the 
		$\ell^2$ mean $N^{-1} \sum_k \b_k$.
		\textit{Bottom row:} computed TV-regularized barycenter for different values of $\lambda$
		($\lambda=0$ corresponding to no regularization).} \label{fig-meg}
\end{figure}

We computed TV-regularized barycenters independently for each class by solving~\eqref{eq-regul-primal} with the TV regularization using the projected gradient descent method~\eqref{eq-algo-fb}. We used a squared Euclidean metric~\eqref{eq-squared-eucl-metric} on the flattened hemisphere.
Since the data is defined on an irregular graph, instead of~\eqref{eq-exmp-tv}, we use a graph-based discrete gradient. We denote $( (i,j) )_{(i,j) \in \Gg}$ the graph which connects neighboring electrodes. The gradient operator on the graph is
\eq{
	\foralls \a \in \RR^n, \quad
	\Aa \a \eqdef ( \a_i - \a_j )_{ (i,j) \in \Gg } \in \RR^{|\Gg|}.
}
The total variation on this graph is then obtained by using $J = \lambda \norm{\cdot}_1$, the $\ell^1$ norm, i.e., we use $\beta=1$ in~\eqref{eq-exmp-tv}. 

Figure~\ref{fig-meg} compares the naive $\ell^2$ barycenters (i.e., the usual mean), barycenters obtained without regularization (i.e., $\lambda=0$), and barycenters computed with an increasing regularization strength $\lambda$. The input histograms $(\a_k)_k$ being very noisy, the use of regularization is important to make the area of significant activity emerge from the noise. The use of a TV regularization helps to keep a sharp transition between active and nonactive regions.

%%%%%%%%%%%%%%%%%%%%%%%%%%%%%%%%%%%%%%%%%%%%%%%%%%%%
%SEC 4.6
\subsection{Gradient flow}
\label{sec-grad-flows}

Instead of computing barycenters, we now use our regularization to define time-evolutions, which are defined through a so-called discrete gradient flow. 

Starting from an initial histogram $\a_0 \in \Si_n$, we define iteratively 
\eql{\label{eq-gradflow-def}
	\a_{k+1} \eqdef \uargmin{\a \in \Si_N} \FD_{\a_k}(\a) + \tau f(\a).
}
This means that one seeks a new iterate at (discrete time) $k+1$ that is both close (according to the entropy-regularized Wasserstein distance) to $\a_k$ and minimizes the functional $f$. In the following, we consider the gradient flow of regularization functionals as considered before, i.e., that are of the form $\tau f = J \circ \Aa$. Problem~\eqref{eq-gradflow-def} is thus a special case of~\eqref{eq-regul-primal} with $N=1$.

Letting $(k,\tau) \rightarrow (+\infty,0)$ with $t=k\tau$, one can informally think of $\a_k$ as a discretization of a time evolution evaluated at time $t$. This method is a general scheme presented in much detail in the monograph~\cite{ambrosio2006gradient}. The use of an implicit time-stepping~\eqref{eq-gradflow-def} allows one to define time evolutions to minimize functionals that are not necessarily smooth, and this is exactly the case of the total variation semi-norm (since $J$ is not differentiable). 
The use of gradient flows in the context of the Wasserstein fidelity to the previous iterate has been introduced initially in the seminal paper~\cite{jordan1998variational}. When $f$ is the entropy functional, this paper proves that the continuous flow defined by the limit $k \rightarrow+\infty$ and $\tau \rightarrow 0$ is a heat equation. Numerous theoretical papers have shown how to recover many existing nonlinear PDEs by considering the appropriate functional $f$; see, for instance,~\cite{otto2001geometry,GianazzaARMA}.

The numerical method we consider in this article is the one introduced in~\cite{Peyre-JKO}, which makes use of the entropic smoothing of the Wasserstein distance. It is not the scope of the present paper to discuss the problem of approximating gradient flows and the underlying limit nonlinear PDEs, and we refer to~\cite{Peyre-JKO} for an overview of the vast literature on this topic. A major bottleneck of the Sinkhorn-type algorithm developed in~\cite{Peyre-JKO} is that it uses a primal optimization scheme (Dykstra's algorithm) that necessitates the computation of the proximal operator of $f$ according to the Kullback--Leibler divergence. Only relatively simple functionals (basically separable functionals such as the entropy) can thus be treated by this approach. In contrast, our dual method can cope with a much larger set of functions, and in particular those of the form $f = J \circ \Aa$, i.e., obtained by pre-composition with a linear operator.

%fig 7
\begin{figure}%[h!]
\iffalse
	\centering	
	\begin{tabular}{@{}c@{\hspace{1mm}}c@{\hspace{1mm}}c@{\hspace{1mm}}c@{\hspace{1mm}}c@{\hspace{1mm}}c@{}}
		\myPicGF{randdisks}{0} &
		\myPicGF{randdisks}{2} &
		\myPicGF{randdisks}{5} &
		\myPicGF{randdisks}{10} &
		\myPicGF{randdisks}{15} &
		\myPicGF{randdisks}{20}  \\
		\myPicGF{hibiscus}{0} &
		\myPicGF{hibiscus}{2} &
		\myPicGF{hibiscus}{5} &
		\myPicGF{hibiscus}{10} &
		\myPicGF{hibiscus}{15} &
		\myPicGF{hibiscus}{20}  \\
		$t=0$ &
		$t=20$ &
		$t=40$ &
		$t=60$ &
		$t=80$ &
		$t=100$ 
	\end{tabular}
\fi
\centerline{\includegraphics[width=\linewidth]{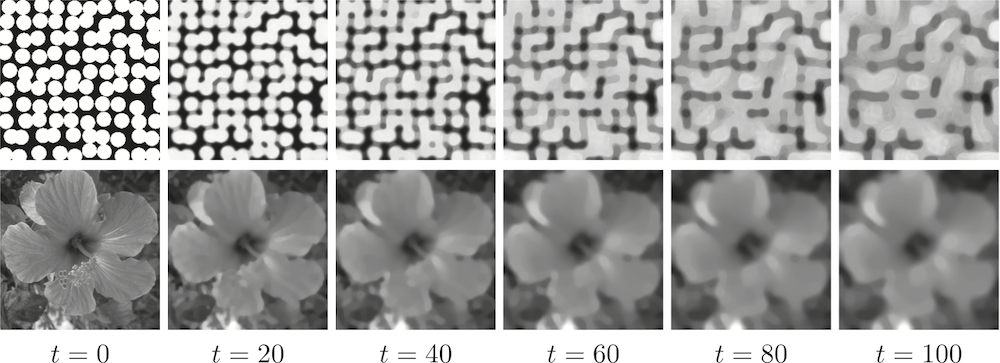}}	 
	\caption{Examples of gradient flows~\eqref{eq-gradflow-def} at various times $t \eqdef k\tau$.} \label{fig-gradient-flow}
\end{figure}

Figure~\ref{fig-gradient-flow} shows examples of gradient flows computed for the isotropic total variation $f(\a)=\norm{\nabla \a}_1$ as defined in~\eqref{eq-exmp-tv}. 
We use the discretization setup considered in Section~\ref{sec-tv-images}, with the same value of $\epsilon$, and a time-step $\tau = 1/10$. 
This is exactly the regularization flow considered by~\cite{Burger-JKO}, which is studied theoretically in~\cite{carlier2017total}. This paper defines formally the highly nonlinear fourth order PDE corresponding to the limit flow. This is, however, not a ``true'' PDE since the initial TV functional is non-smooth, and derivatives should be understood in a weak sense, as limit of an implicit discrete time-stepping. While the algorithm proposed in~\cite{Burger-JKO} uses the usual (unregularized) Wasserstein distance, the use of a regularized transport allows us to deal with problems of larger sizes, with a faster numerical scheme. The price to pay is an additional blurring introduced by the  entropic smoothing, but this is acceptable for applications to denoising in imaging. Figure~\ref{fig-gradient-flow} illustrates the behavior of this TV regularization flow, which has the tendency to group together clusters of mass and performs some kind of progressive ``percolation'' over the whole image. 

% !TEX root = ../SIGEST-SemiDual.tex

%\input sections/sec-conclusion
\section*{Conclusion}

In this paper, we introduced a dual framework for the resolution of certain variational problems involving Wasserstein distances. The key contribution is that the dual functional is smooth and that its gradient can be computed in closed form and involves only multiplications with a Gibbs kernel.
We illustrate this approach with applications to several problems revolving around the idea of Wasserstein barycenters. This method is particularly advantageous for the computation of regularized barycenters, since precomposition by a linear operator  (such as discrete gradient on images or graphs) of functionals is simple to handle. 
Our numerical finding is that entropic smoothing is crucial to stabilize the computation of barycenters and to obtain fast numerical schemes. 
Further regularization using, for instance, a total variation is also beneficial and can be used in the framework of gradient flows.

% one that involves computing Wasserstein barycenters and another that involves learning a dictionary and weights with a Wasserstein fit. Our approach has several attractive qualities: \emph{(i)} our entropic regularization ensures the unicity of the optimal solution in the simple WBP problem and facilitates the computation of each of the convex sub-problems considered in dictionary learning; \emph{(ii)} we observe that solutions obtained with this regularization exhibit a level of smoothness which is comparable to that of the original measures. This property can be desirable in some cases. \emph{(iii)} our approach can be initialized very efficiently thanks to a simple rule that is optimal in the simplified case where all original measures are dirac masses. \emph{(iv)} using Fenchel duality, we show that Wasserstein variational problems can be carried out using closed form functions. We believe this class of approaches can be extended to more general tasks and can scale up to more demanding learning problems.

%%%%%%%%%%%%%%%%%%%%%%%%%%%%%%%%%%%%%%%%%%%%%%%%%%%
\section*{Acknowledgments}

We would like to thank Antoine Rolet, Nicolas Papadakis and Julien Rabin for stimulating discussions. 
We would like to thank Valentina Borghesani, Manuela Piazza et Marco Buiatti, for giving us access to the MEG data.
We would like to thank Fabian Pedregosa and the chaire ``\'Economie des Nouvelles Donn\'ees'' for the help in the preparation of the MEG data.

\bibliographystyle{plainnat}
\bibliography{refs}

\begin{thebibliography}{63}
\providecommand{\natexlab}[1]{#1}
\providecommand{\url}[1]{\texttt{#1}}
\expandafter\ifx\csname urlstyle\endcsname\relax
  \providecommand{\doi}[1]{doi: #1}\else
  \providecommand{\doi}{doi: \begingroup \urlstyle{rm}\Url}\fi

\bibitem[Abraham et~al.(2017)Abraham, Abraham, Bergounioux, and
  Carlier]{AbrahamRadon}
Isabelle Abraham, Romain Abraham, Ma{\"\i}tine Bergounioux, and Guillaume
  Carlier.
\newblock Tomographic reconstruction from a few views: a multi-marginal optimal
  transport approach.
\newblock \emph{Applied Mathematics \& Optimization}, 75\penalty0 (1):\penalty0
  55--73, 2017.

\bibitem[Agueh and Carlier(2011)]{Carlier_wasserstein_barycenter}
M.~Agueh and G.~Carlier.
\newblock Barycenters in the {W}asserstein space.
\newblock \emph{SIAM J. on Mathematical Analysis}, 43\penalty0 (2):\penalty0
  904--924, 2011.

\bibitem[Allen-Zhu et~al.(2017)Allen-Zhu, Li, Oliveira, and
  Wigderson]{allen2017much}
Zeyuan Allen-Zhu, Yuanzhi Li, Rafael Oliveira, and Avi Wigderson.
\newblock Much faster algorithms for matrix scaling.
\newblock \emph{arXiv preprint arXiv:1704.02315}, 2017.

\bibitem[Ambrosio et~al.(2006)Ambrosio, Gigli, and
  Savar{\'e}]{ambrosio2006gradient}
L.~Ambrosio, N.~Gigli, and G.~Savar{\'e}.
\newblock \emph{Gradient flows in metric spaces and in the space of probability
  measures}.
\newblock Springer, 2006.

\bibitem[Aurenhammer(1987)]{aurenhammer1987power}
Franz Aurenhammer.
\newblock Power diagrams: properties, algorithms and applications.
\newblock \emph{SIAM Journal on Computing}, 16\penalty0 (1):\penalty0 78--96,
  1987.

\bibitem[Aurenhammer et~al.(1998)Aurenhammer, Hoffmann, and
  Aronov]{AurenhammerHA98}
Franz Aurenhammer, Friedrich Hoffmann, and Boris Aronov.
\newblock Minkowski-type theorems and least-squares clustering.
\newblock \emph{Algorithmica}, 20\penalty0 (1):\penalty0 61--76, 1998.

\bibitem[Banerjee et~al.(2005)Banerjee, Merugu, Dhillon, and
  Ghosh]{banerjee2005clustering}
A.~Banerjee, S.~Merugu, I.~S Dhillon, and J.~Ghosh.
\newblock Clustering with {Bregman} divergences.
\newblock \emph{The Journal of Machine Learning Research}, 6:\penalty0
  1705--1749, 2005.

\bibitem[Bauschke and Combettes(2011)]{BauschkeCombettes11}
H.~H. Bauschke and P.~L. Combettes.
\newblock \emph{Convex Analysis and Monotone Operator Theory in {H}ilbert
  Spaces.}
\newblock Springer-Verlag, New York, 2011.

\bibitem[Beck and Teboulle(2009)]{beck-fista}
A.~Beck and M.~Teboulle.
\newblock A fast iterative shrinkage-thresholding algorithm for linear inverse
  problems.
\newblock \emph{Journal on Imaging Sciences}, 2\penalty0 (1):\penalty0
  183--202, 2009.

\bibitem[Benamou et~al.(2015)Benamou, Carlier, Cuturi, Nenna, and
  Peyr{\'e}]{BenamouCCNP15}
J-D. Benamou, G.~Carlier, M.~Cuturi, L.~Nenna, and G.~Peyr{\'e}.
\newblock Iterative {Bregman} projections for regularized transportation
  problems.
\newblock \emph{SIAM Journal on Scientific Computing}, 37\penalty0
  (2):\penalty0 A1111--A1138, 2015.

\bibitem[Bigot and Klein(2012)]{BigotBarycenter}
J.~Bigot and T.~Klein.
\newblock Consistent estimation of a population barycenter in the {Wasserstein}
  space.
\newblock \emph{Preprint arXiv:1212.2562}, 2012.

\bibitem[Bonneel et~al.(2011)Bonneel, {van de Panne}, Paris, and
  Heidrich]{Bonneel-displacement}
N.~Bonneel, M.~{van de Panne}, S.~Paris, and W.~Heidrich.
\newblock Displacement interpolation using lagrangian mass transport.
\newblock \emph{ACM Transactions on Graphics (SIGGRAPH ASIA'11)}, 30\penalty0
  (6), 2011.

\bibitem[Bonneel et~al.(2015)Bonneel, Rabin, Peyr{\'e}, and
  Pfister]{2013-Bonneel-barycenter}
N.~Bonneel, J.~Rabin, G.~Peyr{\'e}, and H.~Pfister.
\newblock Sliced and radon {Wasserstein} barycenters of measures.
\newblock \emph{Journal of Mathematical Imaging and Vision}, 51\penalty0
  (1):\penalty0 22--45, 2015.

\bibitem[Bonneel et~al.(2016)Bonneel, Peyr{\'e}, and
  Cuturi]{bonneel2016wasserstein}
N.~Bonneel, G.~Peyr{\'e}, and M.~Cuturi.
\newblock Wasserstein barycentric coordinates: Histogram regression using
  optimal transport.
\newblock \emph{ACM Transactions on Graphics}, 35\penalty0 (4), 2016.

\bibitem[Boyd and Vandenberghe(2004)]{Boyd:1072}
S.~Boyd and L.~Vandenberghe.
\newblock \emph{Convex Optimization}.
\newblock Cambridge University Press, 2004.

\bibitem[Burger et~al.(2012)Burger, Franeka, and Schonlieb]{Burger-JKO}
M.~Burger, M.~Franeka, and C-B. Schonlieb.
\newblock Regularised regression and density estimation based on optimal
  transport.
\newblock \emph{Appl. Math. Res. Express}, 2:\penalty0 209--253, 2012.

\bibitem[Carlier and Poon(2017)]{carlier2017total}
Guillaume Carlier and Clarice Poon.
\newblock On the total variation wasserstein gradient flow and the tv-jko
  scheme.
\newblock \emph{arXiv preprint arXiv:1703.00243}, 2017.

\bibitem[Carlier et~al.(2015)Carlier, Oberman, and
  Oudet]{Carlier-NumericsBarycenters}
Guillaume Carlier, Adam Oberman, and Edouard Oudet.
\newblock Numerical methods for matching for teams and wasserstein barycenters.
\newblock \emph{ESAIM: Mathematical Modelling and Numerical Analysis},
  49\penalty0 (6):\penalty0 1621--1642, 2015.

\bibitem[Caselles et~al.(2007)Caselles, Chambolle, and
  Novaga]{2015-Caselles-tv}
V.~Caselles, A.~Chambolle, and M.~Novaga.
\newblock The discontinuity set of solutions of the {TV} denoising problem and
  some extensions.
\newblock \emph{SIAM Multiscale Modeling and Simulation}, 6\penalty0
  (3):\penalty0 879--894, 2007.

\bibitem[Cazelles et~al.(2017)Cazelles, Bigot, and
  Papadakis]{10.1007/978-3-319-68445-1_10}
Elsa Cazelles, J{\'e}r{\'e}mie Bigot, and Nicolas Papadakis.
\newblock Regularized barycenters in the wasserstein space.
\newblock In Frank Nielsen and Fr{\'e}d{\'e}ric Barbaresco, editors,
  \emph{Geometric Science of Information}, pages 83--90, Cham, 2017. Springer
  International Publishing.

\bibitem[Chan(1996)]{chan1996optimal}
Timothy~M Chan.
\newblock Optimal output-sensitive convex hull algorithms in two and three
  dimensions.
\newblock \emph{Discrete \& Computational Geometry}, 16\penalty0 (4):\penalty0
  361--368, 1996.

\bibitem[Claici et~al.(2018)Claici, Chien, and Solomon]{claici2018stochastic}
Sebastian Claici, Edward Chien, and Justin Solomon.
\newblock Stochastic {Wasserstein} barycenters.
\newblock In \emph{ICML'18}, 2018.

\bibitem[Cohen et~al.(2017)Cohen, Madry, Tsipras, and Vladu]{cohen2017matrix}
Michael~B Cohen, Aleksander Madry, Dimitris Tsipras, and Adrian Vladu.
\newblock Matrix scaling and balancing via box constrained newton's method and
  interior point methods.
\newblock In \emph{Foundations of Computer Science (FOCS), 2017 IEEE 58th
  Annual Symposium on}, pages 902--913. IEEE, 2017.

\bibitem[Cominetti and San~Mart{\'i}n(1994)]{CominettiAsympt}
Roberto Cominetti and Jaime San~Mart{\'i}n.
\newblock Asymptotic analysis of the exponential penalty trajectory in linear
  programming.
\newblock \emph{Mathematical Programming}, 67\penalty0 (1-3):\penalty0
  169--187, 1994.

\bibitem[Courty et~al.(2017)Courty, Flamary, Tuia, and
  Rakotomamonjy]{CourtyFlamary}
N.~Courty, R.~Flamary, D.~Tuia, and A.~Rakotomamonjy.
\newblock Optimal transport for domain adaptation.
\newblock \emph{IEEE Transactions on Pattern Analysis and Machine
  Intelligence}, 39\penalty0 (9):\penalty0 1853--1865, Sept 2017.
\newblock ISSN 0162-8828.
\newblock \doi{10.1109/TPAMI.2016.2615921}.

\bibitem[Cuturi(2013)]{cuturi2013sinkhorn}
M.~Cuturi.
\newblock Sinkhorn distances: Lightspeed computation of optimal transport.
\newblock In \emph{Advances in Neural Information Processing Systems 26}, pages
  2292--2300, 2013.

\bibitem[Cuturi and Doucet(2014)]{cuturi2014fast}
M.~Cuturi and A.~Doucet.
\newblock Fast computation of {Wasserstein} barycenters.
\newblock In \emph{Proceedings of the 31st International Conference on Machine
  Learning (ICML-14)}, pages 685--693, 2014.

\bibitem[Cuturi and Peyr{\'e}(2016)]{2016-Cuturi-siims}
M.~Cuturi and G.~Peyr{\'e}.
\newblock A smoothed dual approach for variational wasserstein problems.
\newblock \emph{SIAM Journal on Imaging Sciences}, 9\penalty0 (1):\penalty0
  320--343, 2016.
\newblock \doi{10.1137/15M1032600}.
\newblock URL \url{http://arxiv.org/abs/1503.02533}.

\bibitem[De~Goes et~al.(2012)De~Goes, Breeden, Ostromoukhov, and
  Desbrun]{de2012blue}
Fernando De~Goes, Katherine Breeden, Victor Ostromoukhov, and Mathieu Desbrun.
\newblock Blue noise through optimal transport.
\newblock \emph{ACM Transactions on Graphics (TOG)}, 31\penalty0 (6):\penalty0
  171, 2012.

\bibitem[Franklin and Lorenz(1989)]{franklin1989scaling}
Joel Franklin and Jens Lorenz.
\newblock On the scaling of multidimensional matrices.
\newblock \emph{Linear Algebra and its applications}, 114:\penalty0 717--735,
  1989.

\bibitem[Genevay et~al.(2016)Genevay, Cuturi, Peyr{\'e}, and
  Bach]{genevay2016stochastic}
Aude Genevay, Marco Cuturi, Gabriel Peyr{\'e}, and Francis Bach.
\newblock Stochastic optimization for large-scale optimal transport.
\newblock In \emph{Advances in Neural Information Processing Systems}, pages
  3440--3448, 2016.

\bibitem[Gianazza et~al.(2009)Gianazza, Savar\'e, and Toscani]{GianazzaARMA}
U.~Gianazza, G.~Savar\'e, and G.~Toscani.
\newblock The wasserstein gradient flow of the {Fisher} information and the
  quantum drift-diffusion equation.
\newblock \emph{Archive for Rational Mechanics and Analysis}, 194\penalty0
  (1):\penalty0 133--220, 2009.
\newblock ISSN 0003-9527.
\newblock \doi{10.1007/s00205-008-0186-5}.
\newblock URL \url{http://dx.doi.org/10.1007/s00205-008-0186-5}.

\bibitem[Jordan et~al.(1998)Jordan, Kinderlehrer, and
  Otto]{jordan1998variational}
R.~Jordan, D.~Kinderlehrer, and O.~Otto.
\newblock The variational formulation of the {Fokker}-{Planck} equation.
\newblock \emph{SIAM journal on mathematical analysis}, 29\penalty0
  (1):\penalty0 1--17, 1998.

\bibitem[Knight and Ruiz(2013)]{knight2013fast}
Philip~A Knight and Daniel Ruiz.
\newblock A fast algorithm for matrix balancing.
\newblock \emph{IMA Journal of Numerical Analysis}, 33\penalty0 (3):\penalty0
  1029--1047, 2013.

\bibitem[Lee and Seung(1999)]{lee1999learning}
D.~D. Lee and H.~S. Seung.
\newblock Learning the parts of objects by non-negative matrix factorization.
\newblock \emph{Nature}, 401\penalty0 (6755):\penalty0 788--791, 1999.

\bibitem[Lellmann et~al.(2014)Lellmann, Lorenz, Schonlieb, and
  Valkonen]{Lorentz}
Jan Lellmann, Dirk~A Lorenz, Carola Schonlieb, and Tuomo Valkonen.
\newblock Imaging with {Kantorovich--Rubinstein} discrepancy.
\newblock \emph{SIAM Journal on Imaging Sciences}, 7\penalty0 (4):\penalty0
  2833--2859, 2014.

\bibitem[L{\'e}onard(2014)]{leonard2013survey}
Christian L{\'e}onard.
\newblock A survey of the {Schr{\"o}dinger} problem and some of its connections
  with optimal transport.
\newblock \emph{Discrete and Continuous Dynamical Systems - A}, 34:\penalty0
  1533, 2014.

\bibitem[L{\'e}vy(2015)]{levy2015numerical}
Bruno L{\'e}vy.
\newblock A numerical algorithm for $l^2$ semi-discrete optimal transport in
  3d.
\newblock \emph{ESAIM: Mathematical Modelling and Numerical Analysis},
  49\penalty0 (6):\penalty0 1693--1715, 2015.

\bibitem[Levy and Schwindt(2017)]{Levy2017review}
Bruno Levy and Erica Schwindt.
\newblock Notions of optimal transport theory and how to implement them on a
  computer.
\newblock \emph{arXiv:1710.02634}, 2017.

\bibitem[Maas et~al.(2015)Maas, Rumpf, Sch{\"o}nlieb, and
  Simon]{MassGenTransport}
Jan Maas, Martin Rumpf, Carola Sch{\"o}nlieb, and Stefan Simon.
\newblock A generalized model for optimal transport of images including
  dissipation and density modulation.
\newblock \emph{ESAIM: Mathematical Modelling and Numerical Analysis},
  49\penalty0 (6):\penalty0 1745--1769, 2015.

\bibitem[M\'erigot(2011)]{Merigot2011}
Q.~M\'erigot.
\newblock A multiscale approach to optimal transport.
\newblock \emph{Computer Graphics Forum}, 30\penalty0 (5):\penalty0 1583--1592,
  2011.

\bibitem[M{\'e}rigot(2011)]{Merigot11}
Quentin M{\'e}rigot.
\newblock A multiscale approach to optimal transport.
\newblock \emph{Comput. Graph. Forum}, 30\penalty0 (5):\penalty0 1583--1592,
  2011.

\bibitem[Minsker et~al.(2017)Minsker, Srivastava, Lin, and
  Dunson]{JMLR:v18:16-655}
Stanislav Minsker, Sanvesh Srivastava, Lizhen Lin, and David~B. Dunson.
\newblock Robust and scalable bayes via a median of subset posterior measures.
\newblock \emph{Journal of Machine Learning Research}, 18\penalty0
  (124):\penalty0 1--40, 2017.
\newblock URL \url{http://jmlr.org/papers/v18/16-655.html}.

\bibitem[Nielsen(2013)]{JeffreysCentroid-2013}
F.~Nielsen.
\newblock Jeffreys centroids: {A} closed-form expression for positive
  histograms and a guaranteed tight approximation for frequency histograms.
\newblock \emph{IEEE Signal Processing Letters (SPL)}, 20\penalty0 (7), 2013.

\bibitem[Oliker and Prussner(1989)]{oliker1989numerical}
Vladimir Oliker and Laird~D Prussner.
\newblock On the numerical solution of the equation $\frac{\partial^2
  z}{\partial x^2} \frac{\partial^2 z}{\partial y^2}-\left( \frac{\partial^2
  z}{\partial x\partial y} \right)^2=f$ and its discretizations, {I}.
\newblock \emph{Numerische Mathematik}, 54\penalty0 (3):\penalty0 271--293,
  1989.

\bibitem[Otto(2001)]{otto2001geometry}
F.~Otto.
\newblock The geometry of dissipative evolution equations: the porous medium
  equation.
\newblock \emph{Communications in partial differential equations}, 26\penalty0
  (1-2):\penalty0 101--174, 2001.

\bibitem[Pele and Werman(2009)]{Pele-iccv2009}
O.~Pele and M.~Werman.
\newblock Fast and robust earth mover's distances.
\newblock In \emph{ICCV'09}, 2009.

\bibitem[Peyr{\'e} and Cuturi(2018)]{2018-Peyre-computational-ot}
G.~Peyr{\'e} and M.~Cuturi.
\newblock Computational optimal transport.
\newblock Preprint 1803.00567, Arxiv, 2018.
\newblock URL \url{https://arxiv.org/abs/1803.00567}.

\bibitem[Peyr{\'e}(2015)]{Peyre-JKO}
Gabriel Peyr{\'e}.
\newblock Entropic approximation of wasserstein gradient flows.
\newblock \emph{SIAM Journal on Imaging Sciences}, 8\penalty0 (4):\penalty0
  2323--2351, 2015.

\bibitem[Rabin and Papadakis(2015)]{RabinPapadakisSSVM}
J.~Rabin and N.~Papadakis.
\newblock Convex color image segmentation with optimal transport distances.
\newblock In \emph{Proc. SSVM'15}, 2015.

\bibitem[Rubner et~al.(2000)Rubner, Tomasi, and Guibas]{RubTomGui00}
Y.~Rubner, C.~Tomasi, and L.J. Guibas.
\newblock The earth mover's distance as a metric for image retrieval.
\newblock \emph{IJCV: International Journal of Computer Vision}, 40, 2000.

\bibitem[Santambrogio(2015)]{santambrogio2015optimal}
Filippo Santambrogio.
\newblock \emph{Optimal Transport for Applied Mathematicians}.
\newblock Birkhauser, 2015.

\bibitem[Schmitzer(2016)]{Schmitzer2016}
B.~Schmitzer.
\newblock Stabilized sparse scaling algorithms for entropy regularized
  transport problems.
\newblock \emph{arXiv:1610.06519}, 2016.

\bibitem[Schmitzer and Schnorr(2013)]{SchmitzerSegmentation}
B.~Schmitzer and C.~Schnorr.
\newblock Object segmentation by shape matching with wasserstein modes.
\newblock In \emph{Proc. EMMCVPR'13}, volume 8081 of \emph{Lecture Notes in
  Computer Science}, pages 123--136. Springer Berlin Heidelberg, 2013.

\bibitem[Sinkhorn(1964)]{Sinkhorn64}
Richard Sinkhorn.
\newblock A relationship between arbitrary positive matrices and doubly
  stochastic matrices.
\newblock \emph{Ann. Math. Statist.}, 35:\penalty0 876--879, 1964.

\bibitem[Solomon et~al.(2014)Solomon, Rustamov, Guibas, and
  Butscher]{Solomon-ICML}
J.~Solomon, R.M. Rustamov, L.~Guibas, and A.~Butscher.
\newblock Wasserstein propagation for semi-supervised learning.
\newblock In \emph{Proc. ICML 2014}, 2014.

\bibitem[Solomon et~al.(2015{\natexlab{a}})Solomon, de~Goes, Peyr{\'e}, Cuturi,
  Butscher, Nguyen, Du, and Guibas]{2015-solomon-siggraph}
J.~Solomon, F.~de~Goes, G.~Peyr{\'e}, M.~Cuturi, A.~Butscher, A.~Nguyen, T.~Du,
  and L.~Guibas.
\newblock Convolutional wasserstein distances: Efficient optimal transportation
  on geometric domains.
\newblock \emph{ACM Transactions on Graphics (Proc. SIGGRAPH 2015)}, to appear,
  2015{\natexlab{a}}.
\newblock URL
  \url{http://gpeyre.github.io/papers/2015-SIGGRAPH-convolutional-ot.pdf}.

\bibitem[Solomon et~al.(2015{\natexlab{b}})Solomon, De~Goes, Peyr{\'e}, Cuturi,
  Butscher, Nguyen, Du, and Guibas]{solomon2015convolutional}
Justin Solomon, Fernando De~Goes, Gabriel Peyr{\'e}, Marco Cuturi, Adrian
  Butscher, Andy Nguyen, Tao Du, and Leonidas Guibas.
\newblock Convolutional {Wasserstein} distances: Efficient optimal
  transportation on geometric domains.
\newblock \emph{ACM Transactions on Graphics (TOG)}, 34\penalty0 (4):\penalty0
  66, 2015{\natexlab{b}}.

\bibitem[Sugiyama et~al.(2017)Sugiyama, Nakahara, and
  Tsuda]{sugiyama2017tensor}
Mahito Sugiyama, Hiroyuki Nakahara, and Koji Tsuda.
\newblock Tensor balancing on statistical manifold.
\newblock \emph{arXiv preprint arXiv:1702.08142}, 2017.

\bibitem[Swoboda and Schnorr(2013)]{SchnorSegmentation}
P.~Swoboda and C.~Schnorr.
\newblock Convex variational image restoration with histogram.
\newblock \emph{SIAM J. Imag. Sci.}, 6\penalty0 (3):\penalty0 1719--1735, 2013.

\bibitem[Villani(2009)]{villani09}
C.~Villani.
\newblock \emph{Optimal transport: old and new}, volume 338.
\newblock Springer Verlag, 2009.

\bibitem[Xia et~al.(2014)Xia, Ferradans, Peyr{\'e}, and Aujol]{2014-xia-siims}
G-S. Xia, S.~Ferradans, G.~Peyr{\'e}, and J-F. Aujol.
\newblock Synthesizing and mixing stationary gaussian texture models.
\newblock \emph{SIAM Journal on Imaging Sciences}, 7\penalty0 (1):\penalty0
  476--508, 2014.

\bibitem[Zen et~al.(2014)Zen, Ricci, and Sebe]{ZenICPR14}
G.~Zen, E.~Ricci, and N.~Sebe.
\newblock Simultaneous ground metric learning and matrix factorization with
  earth movers distance.
\newblock \emph{Proc. ICPR'14}, 2014.

\end{thebibliography}

\end{document}